\pgfplotsset{cycle list/Dark2-8}
\pgfplotsset{
  compat = 1.18,
}
\definecolor{bleu}     {RGB}{ 49,140,231}
\definecolor{cardinal} {RGB}{196, 30, 58}
\definecolor{emerald}  {RGB}{ 80,200,120}
\definecolor{lightgrey}{RGB}{230,230,230}
\definecolor{edge1}{rgb}{1.         0.85359477 0.06797386}
\definecolor{edge2}{rgb}{1.         0.85359477 0.06797386}
\definecolor{edge3}{rgb}{0.75105883 0.84 0.40188235}
\definecolor{edge4}{rgb}{1.         0.74028451 0.        }
\definecolor{edge5}{rgb}{1.         0.66535948 0.        }
\definecolor{edge6}{rgb}{0.99598616 0.58329873 0.        }
\definecolor{edge7}{rgb}{0.88199369 0.85918714 0.13301246}
\definecolor{edge8}{rgb}{0.98823529 0.49803922 0.        }
\definecolor{edge9}{rgb}{1.         0.74028451 0.        }
\definecolor{edge10}{rgb}{1.         0.85359477 0.06797386}
\DeclareRobustCommand\legendbox[1]{\textcolor{#1}{}\begin{tikzpicture}[x=0.2cm, y=0.2cm] \draw [color=black, fill=#1!20,postaction={pattern=crosshatch}] (0,0) -- (0,1) -- (0.6,1) -- (0.6,0) -- (0, 0); \end{tikzpicture}}
\newcommand{\reals}{\mathds{R}\xspace}
\newcommand{\FRC}  {\ensuremath{\kappa_{\text{FR}}\xspace}}
\newcommand{\ORC}  {\ensuremath{\kappa_{\text{OR}}\xspace}}
\newcommand{\REC}  {\ensuremath{\kappa_{\text{R}}\xspace}}
\DeclareMathOperator{\degree}{deg}
\newtheorem{lemma}      {Lemma}
\newtheorem{proposition}{Proposition}
\newcommand{\highlight}[1]{%
  \fcolorbox{black}{bleu!25}{%
    \parbox{\dimexpr\linewidth-2\fboxsep-2\fboxrule}{%
    #1%
    }%
  }%
}
\begin{document}

\title{%
  Curvature Filtrations for Graph Generative Model Evaluation
}

\author{%
  Joshua Southern\footnotemark[1]\\
  Imperial College London \\
  \texttt{jks17@ic.ac.uk} \\
  \And
  Jeremy Wayland\thanks{These authors contributed equally.} \\
  Helmholtz Munich \& Technical University of Munich\\
  \texttt{jeremy.wayland@tum.de} \\
  \AND
  Michael Bronstein\footnotemark[2]\\
  University of Oxford\\
  \texttt{michael.bronstein@cs.ox.ac.uk} \\
  \And
  Bastian Rieck\thanks{These authors jointly directed the work.}\\
  Helmholtz Munich \& Technical University of Munich\\
  \texttt{bastian.rieck@tum.de} \\
}

\maketitle

\begin{abstract}
  Graph generative model evaluation necessitates understanding
  differences between graphs on the distributional level.
  This entails being able to harness salient attributes of graphs in an efficient
  manner.
  Curvature constitutes one such property that has recently proved its utility in characterising graphs.
  Its expressive properties, stability, and practical utility in model
  evaluation remain largely unexplored, however.
  We combine graph curvature descriptors with emerging methods from
  topological data analysis to obtain robust, expressive descriptors for
  evaluating graph generative models.
\end{abstract}


\section{Introduction}

Graph-structured data are prevalent in a number of different domains,
including social networks~\citep{fake_news}, bioinformatics
\citep{Jumper2021, Ruiz2021-ac}, and transportation
\citep{Jiang_2022}.
The ability to generate new graphs from
a distribution is a burgeoning technology with promising applications in
molecule or protein design~\cite{junction-tree, antibody-design}, as
well as circuit design~\citep{circuit-design}.
To compare graph generative models and advance research in this
area, it is essential to have a metric that can measure the distance
between a set of generated and reference graphs, with enough \emph{expressivity}
to critically evaluate model performance.
Typically, this is done by using a set of descriptor functions, which
map a graph to a high-dimensional representation in~$\reals^d$. An
evaluator function, such as the \emph{maximum mean
discrepancy}~\citep[MMD]{mmd}, may then be used to get a distance between
two distributions of graphs by comparing their vectorial
representations~\citep{Liao, Niu}.
This state of the art was recently critiqued by \citet{OBray22a} since
it
\begin{inparaenum}[(i)]
  \item requires numerous parameter and function choices, 
  \item is limited by the \emph{expressivity} of the descriptor function, and
  \item does not come equipped with \emph{stability guarantees}. 
\end{inparaenum}

We propose to overcome these issues through topological data
analysis~(TDA), which is capable of capturing multi-scale features of
graphs while being more expressive than simple descriptor functions.
TDA is built on the existence of a function of the form~$f\colon V \to \reals$, or~$f\colon E \to \reals$ 
on a graph~$G = (V, E)$. This is used to obtain a \emph{filtration},
i.e.\ an ordering of subgraphs, resulting in a set of
topological descriptors, the \emph{persistence diagrams}. Motivated by
their expressive power and computational efficiency, we use recent
notions of discrete curvature~\citep{Devriendt22, Forman2003, OLLIVIER2007643} 
to define filtrations and calculate \emph{persistence
landscapes}~\citep{Bubenik15} from the persistence diagrams, thus
obtaining a descriptor whose Banach space formulation permits
statistical calculations.
Our proposed method comes equipped with stability guarantees, can count
certain substructures and measure important graph characteristics, and
can be used to evaluate a variety of statistical tests since it permits
computing distances between graph distributions.

Our \textbf{contributions} are as follows:
\begin{compactitem}
  \item We provide a thorough theoretical analysis of the stability and
    expressivity of recent notions of discrete curvature, showing their fundamental utility for graph learning applications.
  \item Using discrete curvature and TDA, we develop a new metric for
    graph generative model evaluation.
  \item Our experiments reveal our metric is robust and expressive, thus
    improving upon current approaches that use simple graph descriptor
    and evaluator functions. 
\end{compactitem}

\section{Background}

The topological descriptors used in this paper are based on
\emph{computational topology} and \emph{discrete curvature}.
We give an overview of these areas and briefly comment on previous work
that makes use of \emph{graph statistics} in combination with
MMD. In the following, we consider undirected graphs, denoted by $G = (V,
E)$, with a set of vertices~$V$ and a set of edges~$E \subseteq V \times
V$.

\subsection{MMD and Metrics Based On Graph Statistics}

MMD is a powerful method for comparing different distributions of
structured objects. It employs \emph{kernel functions}, i.e.\ positive
definite similarity functions, and can thus be directly combined with
standard graph kernels~\citep{Borgwardt20} for graph distribution
comparison.
%
However, there are subtle issues when calculating kernels in
$\reals^d$: Gaussian kernels on geodesic metric spaces, for instance, have
limited applicability when spaces have non-zero
curvature~\citep{Feragen15}, and certain kernels in the literature are
indefinite, thus violating one of the tenets of MMD~\citep{OBray22a}.
Despite these shortcomings, MMD is commonly used to evaluate graph
generative models~\citep{graph-rnn, sparse-graph-gen}. This is
accomplished by extracting a feature vector from each graph, such as the
clustering coefficient or node degree, and subsequently calculating
empirical MMD values between generated and reference samples.
Some works~\citep{moreno} combine multiple structural properties
of graphs into a single metric through the Kolmorogov--Smirnov~(KS)
multidimensional distance~\citep{JUSTEL1997251}. Combining graph
statistics into a single measure has also led to metrics between
molecular graphs, such as the quantitative estimate of
drug-likeness~(QED), which is a common measure in drug
discovery~\citep{qed}. However, these simple statistics, even when
considered jointly, often lack expressivity, have no stability
guarantees, and their use with MMD requires numerous parameter and
function choices~\citep{OBray22a}.

\subsection{Computational Topology}


Computational topology assigns \emph{invariants}---characteristic
properties that remain unchanged under certain transformations---to
topological spaces. For graphs, the simplest invariants are given by the 
\mbox{$0$-dimensional}~($\beta_0$) and \mbox{$1$-dimensional}~($\beta_1$) Betti
numbers. These correspond to the number of connected components and number of
cycles, respectively, and can be computed efficiently. Their limited
expressivity can be substantially increased when paired with a scalar-valued \emph{filtration function} $f\colon E \to \reals$.\footnote{One can also consider functions over vertices, $f\colon V \to \reals$, when building filtrations. See \cref{app:sec:edge_v_vertex_filtrations} for a remark on the equivalence of these viewpoints.}
Since~$f$ can only attain a finite
number of values~$a_0,a_1, a_2, \dots$ on the graph, this permits
calculating a \emph{graph filtration}
$
\emptyset \subseteq G_0 \subseteq G_1 \ldots \subseteq G_{k-1} \subseteq G_k = G,
$
where each $G_i := (V_i, E_i)$, with $E_i := \{ e \in E \mid f(e) \leq a_i\}$
and $V_i := \{ v \in V \mid \exists e\in E_i \text{ s.t. } v\in e\}$.
This \emph{sublevel set filtration}\footnote{%
  Swapping `$\leq$' for `$\geq$' and $\max$ for $\min$ results in the
  \emph{superlevel set filtration} of equal expressivity.
}
permits tracking topological features, such as cycles, via \emph{persistent
homology}~\citep{Edelsbrunner10}. If a topological feature appears for the first time in $G_i$ and
disappears in $G_j$, we represent the feature as a tuple $(a_i, a_j)$,
which we can collect in a \emph{persistence diagram}~$D$.
Persistent homology thus tracks changes in connected components and
cycles over the complete filtration, measured using a filtration
function~$f$.
Persistence diagrams form a metric space, with the distance between them
given by the \emph{bottleneck distance}, defined as 
$d_B\mleft(D, D'\mright) := \mleft( \inf_{\eta\colon D \to D'}\sup_{x\in
D}\|x-\eta(x)\|_\infty \mright)$,
where $\eta$ ranges over all bijections between the two diagrams.
A seminal \emph{stability theorem}~\citep{Cohen-Steiner2007} states
that the bottleneck distance between persistence diagrams~$D_f,
D_g$, generated from two functions~$f$ and~$g$ on the same graph, is
upper-bounded by $d_B(D_f,D_g) \leq ||f-g||_\infty$.
The infinity norm of the functions, a geometrical quantity, hence
limits the topological variation.
In practice, we convert persistence diagrams to an equivalent representation, the \emph{persistence landscape}~\citep{Bubenik15}, which is more amenable to statistical analyses and the integration into machine learning pipelines.

\paragraph{Advantageous Properties.}
%
Persistent homology satisfies expressivity and stability properties. The
choice of filtration~$f$ affects expressivity: with the right
filtration, persistent homology can be \emph{more} expressive than the
$1$-WL test~\citep{Horn22a, rieck2023expressivity}, which is commonly used to bound the
expressivity of graph neural networks~\citep{Morris19a, Morris21a, Xu19a}.
Thus, given a suitable filtration, we can create a robust and expressive
metric for comparing graphs. Moreover, as we will later see, TDA
improves the expressivity of \emph{any} $f$, meaning
that even if the function on its own is not capable of distinguishing
between different graphs, using it in a filtration context can overcome
these deficiencies.

\subsection{Discrete Curvature}

While filtrations can be learnt~\cite{Hofer20, Horn22a}, in the absence of
a well-defined learning task for graph generative model evaluation, we
opt instead to employ existing functions that exhibit suitable
expressivity properties.
Of particular interest are functions based on \emph{discrete curvature},
which was shown to be an expressive feature for graph and molecular learning
tasks~\citep{affinity-gnn, ollivier_binding_affinity, forman_binding_affinity}. 
Curvature is a fundamental concept in differential geometry and
topology, making it possible to distinguish between different types of
manifolds. There are a variety of different curvature formulations with varying
properties, with \emph{Ricci curvature} being one of the most
prominent.
Roughly speaking, Ricci curvature is based on measuring the differences in
the growth of volumes in a space as compared to a model Euclidean
space. While originally requiring a smooth setting, recent work  started
exploring how to formulate a theory of Ricci curvature in
the discrete setting~\citep{Coupette23a, Haantjes, OLLIVIER2007643, Forman2003,
bakry-emery, Devriendt22}.
Intuitively, discrete curvature measures quantify a notion of
similarity between node neighbourhoods, the discrete concept
corresponding to `volume.' They tend to be \emph{larger} for
structures where there are overlapping
neighbourhoods such as cliques, \emph{smaller}~(or zero) for grids and
\emph{lowest}~(or negative) for tree-like structures.
Ricci curvature for graphs provides us with sophisticated tools to
analyse the neighbourhood of an edge and recent works have shown the
benefits of using some of these curvature-based methods in combination
with Graph Neural Networks~(GNNs) to boost
performance~\citep{affinity-gnn}, assess differences between real-world
networks~\citep{Samal_2018}, or enable \emph{graph rewiring} to reduce
over-squashing in GNNs~\citep{over-squashing-curvature}. However, the
representational power and stability properties of these measures remain
largely unexplored.
Subsequently, we will focus on three different types of curvature, 
\begin{inparaenum}[(i)]
  \item Forman--Ricci curvature~\citep{Forman2003},
  \item Ollivier--Ricci curvature~\citep{OLLIVIER2007643}, and
  \item Resistance curvature~\cite{Devriendt22}.
\end{inparaenum}
We find these three notions to be prototypical of discrete curvature
measures, increasing in complexity and in their ability to capture
\emph{global} properties of a graph.

\paragraph{Forman--Ricci Curvature.}

The \emph{Forman(--Ricci) curvature} for an edge $(i, j)\in E$ is defined as 
\begin{equation}
  \FRC(i,j) := 4 - d_i - d_j + 3|\#_{\Delta}|,
  \label{eq:Forman}
\end{equation}
where $d_i$ is the degree of node $i$ and $|\#_{\Delta}|$ is the number
of $3$-cycles~(i.e.\ triangles) containing the adjacent nodes.

\paragraph{Ollivier--Ricci Curvature.}
%
Ollivier introduced a notion of curvature for metric spaces that
measures the Wasserstein distance between Markov chains, i.e.\ random
walks, defined on two nodes~\citep{OLLIVIER2007643}.
Let $G$ be a graph with some metric $d_{G}$, and $\mu_v$ be
a probability measure on G for node $v \in V$.
The \emph{Ollivier--Ricci
curvature} of \emph{any}\footnote{%
  In contrast to other notions of curvature, Ollivier--Ricci curvature
  is defined for \emph{both} edges and non-edges.
}
pair ${i, j}\in V \times V$ with $i \neq j$ is then defined as
\begin{equation}
  \ORC(i,j) := 1 - \frac{1}{d_G(i, j)}W_{1}(\mu_{i}, \mu_{j}),
  \label{eq:Ollivier--Ricci}
\end{equation}
where $W_1$ refers to the first \emph{Wasserstein distance} between
$\mu_i$, $\mu_j$.
\cref{eq:Ollivier--Ricci} defines the Ollivier--Ricci~(OR) curvature
in a general setting outlined by
\citet{or_curvature_smooth}; this is in contrast to the majority of
previous works in the graph setting which specify $d_{G}$ to be
the shortest-path distance and $\mu_i, \mu_j$ to be uniform
probability measures in the $1$-hop neighbourhood of the node.
Extending the probability measures so that they act on larger locality
scales is known to be beneficial for characterising graphs~\citep{Gosztolai2021,
k-step-measure, infinite-reg-trees}.
We assume this general setting to define different notions of OR
curvature on the graph, permitting us the flexibility of altering the
probability measure and the metric. 

\paragraph{Resistance Curvature.}
%
The resistance curvature for edges of a graph, as established in
\citet{Devriendt22}, is inspired by Ohm's Law and the concept of
effective resistance, a well-studied, global metric between nodes in a weighted
graph that quantifies the resistance exerted by the network when current
flows between nodes.
For a graph $G=(V,E)$, let $R_{ij}$ be the resistance distance between
nodes $i,j\in V$, defined in \cref{eq:resistance_distance}.
The \emph{node resistance curvature} of a node $i \in V$ is then
defined as
$p_i := 1-\frac{1}{2} \sum_{j\sim i}R_{ji}$.
The curvature of an edge $(i,j)\in E$, what we refer to as
\emph{resistance curvature}, is then defined as
\begin{equation}\label{eq:Resistance}
  \REC(i,j) := \frac{2(p_i+p_j)}{R_{ij}}
\end{equation}
The resistance curvature of an edge is related to the average
distance between the neighbourhoods of the nodes connected by the edge. 

\section{Our Method}
%
Each notion of discrete curvature yields a scalar-valued function on the edges of a graph.
We use these functions to define a family of \emph{curvature
filtrations} based on sublevel sets.
In combination with persistent homology, this enables us to assess the
structural properties of a given graph at multiple scales, measured via curvature. 
Using
metrics on aggregated topological signatures---here, in the form of
\emph{persistence landscapes}---we may then compare two distributions
of graphs.
Specifically, we propose the following scheme for graph generative model
evaluation:
\begin{compactenum}
  \item Given a specific curvature filtration, we generate a set of
    \emph{persistence diagrams} that encode the persistent
    homology in dimensions~$0$ and~$1$ for each graph in the distribution. Each diagram tracks the lifespan of connected components and
     cycles as they appear in the filtration of a given graph, resulting in a multi-scale
     summary of the graph's structure. 
  \item To permit an analysis on the distributional level, we convert
    each diagram into a more suitable representation, namely
    a \emph{persistence landscape}. As functional summaries of topological information, persistence
    landscapes allow for easy calculation of averages and
    distances~\citep{Bubenik15}.
  \item Finally, we conduct statistical comparisons
    between graph distributions, e.g.\ permutation tests using $p$-norms, in this latent space, providing an \emph{expressive} metric for evaluating generative models.
\end{compactenum}
\Cref{fig:pipeline} illustrates our proposed pipeline using Forman--Ricci curvature. For choosing a notion of curvature in practice, implementation details, and computational performance we refer the reader to \cref{sec:ChoosingCurvature}, \cref{app:pseudocode}, and \cref{app:sec:Computational Complexity} respectively. We also make our framework publicly available.\footnote{Source code is available at \url{https://github.com/aidos-lab/CFGGME}.}

\begin{figure*}[tbp]
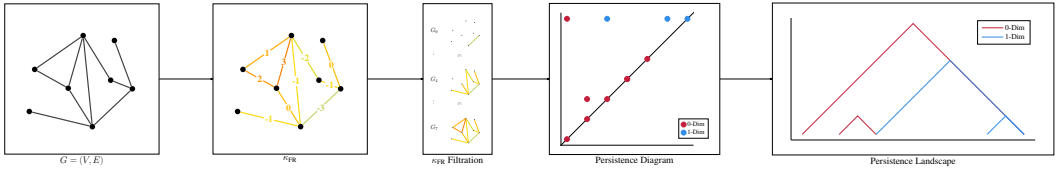

  \centering
  \subfile{figures/main_figure}
  \caption{
    An overview of our pipeline for evaluating graph generative models
    using discrete curvature. We depict a graph's edges being coloured by
    $\FRC$, as described in \cref{eq:Forman}.
    The ordering on edges gives rise to a \emph{curvature
    filtration}, followed by a corresponding persistence diagram and
    landscape. For graph generative models, we select a curvature, apply
    this framework element-wise, and evaluate the similarity of the
    generated and reference distributions by comparing their average
    landscapes.
  }
  \label{fig:pipeline}
\end{figure*}

\subsection{Stability}\label{sec:Stability}

We first discuss the general stability of topological calculations,
proving that changes in topological dissimilarity are bounded by changes
in the filtration functions. Moreover, we show that
filtrations based on discrete curvature satisfy stability
properties if the underlying graph is modified.

\paragraph{Topological Features.}
%
Using the persistent homology stability
theorem~\citep{Cohen-Steiner2007}, we know that if two curvature
filtrations are similar on a graph, their persistence
diagrams will also be similar.
However, the theorem only holds for two different functions on
the \emph{same} graph, whereas the distributional case has not yet been
addressed by the literature. Our theorem provides a~(coarse) upper bound.
\begin{restatable}{theorem}{thmstabilityupperbound}
  Given graphs $F = (V_F, E_F)$ and $G = (V_G, E_G)$ with
  filtration functions~$f, g$, and corresponding persistence
  diagrams $D_f, D_g$, we have
  $d_B(D_f, D_g) \leq \max\mleft\{ \mathrm{dis}(f, g), \mathrm{dis}(g, f) \mright\}$,
  where $\mathrm{dis}(f, g) := \mleft|\max_{x \in E_F} f(x) - \min_{y
  \in E_G} g(y) \mright|$ and vice versa for $\mathrm{dis}(g, f)$.
  \label{thm:Stability}
\end{restatable}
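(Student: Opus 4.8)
The plan is to prove the bound \emph{constructively}, by exhibiting a single bijection whose cost is at most $C := \max\{\mathrm{dis}(f,g), \mathrm{dis}(g,f)\}$; since $d_B$ is an infimum over bijections, any such witness suffices. The starting point is the observation that every finite point of $D_f$ has both of its coordinates in the interval $[m_f, M_f]$, where $m_f := \min_{x \in E_F} f(x)$ and $M_f := \max_{x \in E_F} f(x)$, because births and deaths are attained at values of the filtration function; analogously, the coordinates of $D_g$ lie in $[m_g, M_g]$. Here I treat essential classes---those surviving to the top of the filtration---as dying at the maximal filtration value, the usual convention for graph filtrations, so that all coordinates are finite.

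The crux is a single estimate: the $L^\infty$-distance from any point $p = (b, d) \in D_f$ to the diagonal equals $(d - b)/2$ and is therefore bounded by $(M_f - m_f)/2$. I would show $M_f - m_f \le 2C$ by writing $M_f - m_f = (M_f - m_g) + (m_g - m_f)$ and bounding each summand separately: the first by $M_f - m_g \le |M_f - m_g| = \mathrm{dis}(f, g) \le C$, and the second by $m_g - m_f \le M_g - m_f \le |M_g - m_f| = \mathrm{dis}(g, f) \le C$, using $m_g \le M_g$. The symmetric computation gives $M_g - m_g \le 2C$, so every point of $D_g$ also lies within $L^\infty$-distance $C$ of the diagonal. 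It is exactly this step that forces \emph{both} $\mathrm{dis}(f,g)$ and $\mathrm{dis}(g,f)$ to appear in the bound.

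With this in hand, I would take the bijection $\eta$ that sends each off-diagonal point of $D_f$ to its orthogonal projection onto the diagonal (an available point of the augmented diagram $D_g$), sends the projection of each off-diagonal point of $D_g$ back to that point, and matches all remaining diagonal points at zero cost. Every matched pair then costs at most $C$, so $\sup_{x} \|x - \eta(x)\|_\infty \le C$ and hence $d_B(D_f, D_g) \le C$, as claimed. As a sanity check on scale, any cross-diagram pair $p \in D_f$, $q \in D_g$ also satisfies $\|p - q\|_\infty \le C$, since the coordinate ranges $[m_f, M_f]$ and $[m_g, M_g]$ are never farther apart than $\max\{|M_f - m_g|, |M_g - m_f|\}$.

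The main obstacle is conceptual rather than computational: because $F$ and $G$ are \emph{different} graphs, the Cohen--Steiner stability theorem does not apply, and there is no common domain on which to compare $f$ and $g$. The resolution is to abandon any attempt at a feature-to-feature matching and instead route everything through the diagonal, which is precisely why the resulting estimate is deliberately coarse. The one technical point requiring care is the treatment of essential, infinite-persistence classes; assuming the capping convention above---or, in dimension $0$, that both graphs are connected, so the single essential component matches across diagrams at a cost equal to its birth difference, which is again at most $C$---keeps all quantities finite and leaves the argument intact.
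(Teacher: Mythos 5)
Your proof is correct, but it takes a genuinely different route from the paper's. Both arguments rest on the same elementary observation---every coordinate of a point in $D_f$ is an attained value of $f$, hence lies in $[m_f, M_f]$, and similarly for $D_g$---but the paper then bounds the cost of matching points \emph{across} the two diagrams: any pair $p \in D_f$, $q \in D_g$ satisfies $\|p - q\|_\infty \le \max\{|M_f - m_g|, |M_g - m_f|\}$, which is exactly your closing sanity check. You instead construct an explicit bijection that routes every off-diagonal point to the diagonal, and control its cost by showing $M_f - m_f \le \mathrm{dis}(f,g) + \mathrm{dis}(g,f) \le 2C$, where $C$ denotes the right-hand side of the theorem (and symmetrically for $g$). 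Your version buys two things the paper's terse argument does not make explicit: first, it is indifferent to the cardinalities of the two diagrams, which for two different graphs will generically differ, so any bijection \emph{must} use the diagonal---a case the paper's cross-matching sketch silently ignores; second, it actually establishes the sharper bound $d_B(D_f, D_g) \le \tfrac{1}{2}\max\{M_f - m_f,\, M_g - m_g\}$, of which the stated theorem is a corollary (and which can be much smaller, e.g.\ when the ranges of $f$ and $g$ are disjoint). The paper's view, in exchange, is shorter and explains directly why the two quantities $\mathrm{dis}(f,g)$ and $\mathrm{dis}(g,f)$ are the natural ones appearing in the statement. Your treatment of essential classes---capping deaths at the maximal filtration value---matches the convention implicit in the paper's proof, where every death coordinate is assumed to be of the form $f(e_F)$; without some such convention the bound could not hold, since diagrams with differing numbers of infinite-persistence points are at infinite bottleneck distance.
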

This upper bound implies that changes on the level of persistence
diagrams are bounded by changes between the input functions. The
stability of our method thus hinges on the stability of the filtration
functions, so we need to understand the behaviour of curvature under
perturbations.
Following previous work~\citep{OBray22a}, we aim to understand and
quantify the stability of our method in response to adding and deleting
edges in the graph.
Our stability theorems establish bounds on various discrete curvature
measures for \emph{finite}, \emph{unweighted}, \emph{connected} graphs
in response to these perturbations.
We restrict the outcome of a perturbation to graphs of the
form $G'=(V,E')$ that satisfy $|E| \not= |E'|$ and do not change the
number of connected components of a graph.
Our theoretical results bound the new curvature $\kappa'$ according to
the structural properties of $G$. For an exhaustive list of theorems and
proofs, as well as experiments analysing the change in curvature for
perturbed Erdős–Rényi~(ER) graphs, see
\cref{app:Proofs} and \cref{app:Stability Analysis}.

\paragraph{Forman--Ricci Curvature.}
%
We first analyse Forman--Ricci curvature ~$\FRC$, and prove that it is stable with respect to adding and deleting
edges.
\begin{restatable}{theorem}{thmFormanAdditionAndDeletion}
    If $G'$ is the graph generated by \textbf{edge addition}, then the
    updated Forman curvature $\FRC'$ for pre-existing edges $(i,j)\in E$
    can be bounded by $\FRC(i,j) -1 \leq \FRC'(i,j) \leq \FRC(i,j) + 2$.
    If $G'$ is the graph generated by \textbf{edge deletion}, then the
    updated Forman curvature $\FRC'$ for pre-existing edges $(i,j)\in E$
    can be bounded by $\FRC(i,j) -2 \leq \FRC'(i,j) \leq \FRC(i,j) + 1$.
\end{restatable}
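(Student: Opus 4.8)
The plan is to work directly from the definition in \cref{eq:Forman} and to track how each of its three ingredients responds to a single-edge perturbation. Writing $\FRC'(i,j) - \FRC(i,j) = -(d_i' - d_i) - (d_j' - d_j) + 3\mleft(|\#_{\Delta}'| - |\#_{\Delta}|\mright)$, the task reduces to bounding the changes in the two endpoint degrees and in the number of triangles supported on the fixed edge $(i,j)$.

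First I would handle edge addition. Adding a single edge increments the degree of each of its two endpoints by exactly $1$ and leaves all other degrees unchanged, so $d_i' - d_i, d_j' - d_j \in \{0,1\}$, with the value $1$ occurring precisely when the new edge is incident to the corresponding node. For the triangle term, I would observe that a triangle on $(i,j)$ is determined by a common neighbour $k$ of $i$ and $j$, i.e.\ by the simultaneous presence of $(i,k)$ and $(j,k)$; since $(i,j)$ already exists, a new such triangle can only be created by supplying a previously missing one of these two edges. Hence the added edge must be incident to $i$ or to $j$, and it can complete at most one triangle (the one through its far endpoint), giving $|\#_{\Delta}'| - |\#_{\Delta}| \in \{0,1\}$.

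The key structural point, and the crux of the argument, is the coupling between these terms: a new triangle on $(i,j)$ can only arise together with a degree increase at $i$ or $j$, and never in isolation. I would make this explicit through a short case analysis on how the added edge meets $\{i,j\}$. If it touches neither, nothing changes and the difference is $0$. If it touches both it would be a parallel copy of $(i,j)$, which is excluded in a simple graph. If it touches exactly one endpoint, the degree term contributes $-1$ while the triangle term contributes either $0$ or $+3$ according to whether the far endpoint was already adjacent to the other node; this yields a net change of $-1$ or $+2$. Taking the extremes over all cases gives $\FRC(i,j) - 1 \leq \FRC'(i,j) \leq \FRC(i,j) + 2$.

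For edge deletion I would run the mirror-image argument, since deletion undoes an addition: the degree differences now lie in $\{-1,0\}$ and the triangle difference in $\{-1,0\}$, again coupled so that a lost triangle forces a degree decrease at $i$ or $j$. The single-incident-endpoint case now contributes $+1$ from the degree term and $0$ or $-3$ from the triangle term, giving a net change of $+1$ or $-2$, and hence $\FRC(i,j) - 2 \leq \FRC'(i,j) \leq \FRC(i,j) + 1$. I expect the only genuinely delicate step to be justifying the ``at most one triangle'' claim and its coupling to the degree change cleanly---everything else is bookkeeping---so I would state that observation as the pivot of the proof and let both bounds fall out of the same case split.
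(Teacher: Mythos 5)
Your proposal is correct and follows essentially the same route as the paper's proof: a case analysis on whether the perturbed edge is incident to an endpoint of $(i,j)$, with the degree term contributing $\mp 1$ and the triangle term contributing $0$ or $\pm 3$ in the incident case, and no change otherwise. The coupling you emphasise (a new triangle on $(i,j)$ forces the new edge to be incident to $i$ or $j$) is exactly what the paper's Case~1/Case~2 split encodes, and the deletion bound is obtained by the same mirror argument in both.
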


\paragraph{Ollivier--Ricci Curvature.}
%
Let $\mathcal{G}= (G, d_G,\mu)$ be a triple for specifying
Ollivier--Ricci curvature calculations, with $G$ denoting an
unweighted, connected graph, $d_G$ its associated graph metric, and
$\mu :=\{\mu_v \mid v\in V\}$ a collection of probability measures at
each node.
Furthermore, let $\delta_i$ denote the Dirac measure at node
$i$ and $J(i)$ := $W_1(\delta_i, \mu_i)$ the corresponding jump
probability in the graph $G$ as defined by \citet{OLLIVIER2007643}.
Following an edge addition or deletion, we consider an updated triple
$\mathcal{G}' = (G', d_{G'}, \mu')$, and remark that this yields an
updated Wasserstein distance $W_1'$, calculated in terms of the new
graph metric $d_{G'}$.

\begin{restatable}{theorem}{thmORBonnetMyers}
  Given a perturbation~(either \textbf{edge addition} or \textbf{edge
  deletion}) producing $\mathcal{G}'$, the Ollivier--Ricci curvature $\ORC'(i, j)$
  of a pair~$(i, j)$ can be bounded via
  \begin{equation}
    1 - \frac{1}{d_{G'}(i,j)}\big[2W'_{\max}+W_1'(\mu_i,\mu_j)\big] \leq
    \ORC'(i, j)
    \leq \frac{J'(i)+J'(j)}{d_{G'}(i,j)},
  \end{equation}
  where $J'(v) := W_1'(\delta_v,\mu'_v)$ refers to the new jump
  probabilities and $W'_{\max} := \max_{x\in V} W_1'(\mu_x,\mu_x')$
  denotes the maximal reaction to the perturbation~(measured using the
  updated Wasserstein distance).
\end{restatable}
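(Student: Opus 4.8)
The plan is to derive both inequalities purely from the triangle inequality for the first Wasserstein distance $W_1'$ on the space of probability measures over $(V, d_{G'})$, together with the elementary identity $W_1'(\delta_i, \delta_j) = d_{G'}(i,j)$ for Dirac measures. Since every quantity in the statement is already expressed through the updated distance $W_1'$, and the hypothesis that the perturbation preserves the number of connected components guarantees $d_{G'}$ remains a genuine finite metric, no delicate transport-plan construction is required; the whole argument reduces to choosing the right interpolating measures in each case.

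For the lower bound, I would expand $W_1'(\mu_i', \mu_j')$ by inserting the \emph{original} measures $\mu_i, \mu_j$ as intermediate points, giving
\[
  W_1'(\mu_i', \mu_j') \leq W_1'(\mu_i', \mu_i) + W_1'(\mu_i, \mu_j) + W_1'(\mu_j, \mu_j').
\]
The two outer terms each measure the reaction of a single node's measure to the perturbation and are therefore bounded by $W'_{\max}$, yielding $W_1'(\mu_i', \mu_j') \leq 2W'_{\max} + W_1'(\mu_i, \mu_j)$. Dividing by $d_{G'}(i,j)$ and subtracting from $1$ produces exactly the claimed lower bound on $\ORC'(i,j)$.

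For the upper bound, I would instead interpolate the two Dirac measures through the \emph{new} measures:
\[
  d_{G'}(i,j) = W_1'(\delta_i, \delta_j) \leq W_1'(\delta_i, \mu_i') + W_1'(\mu_i', \mu_j') + W_1'(\mu_j', \delta_j) = J'(i) + W_1'(\mu_i', \mu_j') + J'(j),
\]
where the outer terms are recognised as the new jump probabilities $J'(i)$ and $J'(j)$. Rearranging to isolate $d_{G'}(i,j) - W_1'(\mu_i',\mu_j')$ and dividing by $d_{G'}(i,j)$ then gives $\ORC'(i,j) \leq \mleft(J'(i)+J'(j)\mright)/d_{G'}(i,j)$.

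There is no genuinely hard calculational step; the only point requiring care is bookkeeping the distinction between the two sources of perturbation, namely the change of metric (encoded by replacing $W_1$ with $W_1'$ throughout) and the change of measure (encoded by the terms $W'_{\max}$ and $J'$), so that each appears on the correct side of the final inequalities. One should also confirm that the connectivity-preserving assumption is genuinely used—it is what keeps $d_{G'}(i,j)$ finite and positive so that division is legitimate—and note that the bound degenerates gracefully, both sides collapsing toward the unperturbed value as $W'_{\max} \to 0$ and $\mu' \to \mu$.
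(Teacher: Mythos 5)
Your proposal is correct and follows essentially the same route as the paper: the upper bound is the triangle inequality $W_1'(\delta_i,\delta_j) \leq J'(i) + W_1'(\mu_i',\mu_j') + J'(j)$ combined with the definition of $\ORC'$ (the paper packages this as a separate Bonnet--Myers-style lemma and then applies it to the perturbed triple), and the lower bound is obtained exactly as you do, by interpolating $W_1'(\mu_i',\mu_j')$ through the original measures $\mu_i, \mu_j$ and bounding the two reaction terms by $W'_{\max}$. Your bookkeeping of which measures and which metric appear in each term matches the paper's argument precisely.
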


\paragraph{Resistance Curvature.}
%
Let $G$ be an unweighted, connected graph with a resistance distance
$R_{ij}$ and resistance curvature
$\REC(i,j)$ for each $(i,j)\in E$. Furthermore, let $d_x$ denote the
degree for node $x\in V$. 
We find that $\REC$ is well-behaved
under these perturbations, in the sense that edge additions can only
\emph{increase} the curvature, and edge deletions can only
\emph{decrease} it. For edge additions, we obtain the following
bound~(see \cref{app:Resistance} for the corresponding bound for edge
deletions).
\begin{restatable}{theorem}{thmResistanceAddition}\label{thm:ResistanceAddition}
If $G'$ is
the graph generated by \textbf{edge addition}, then $\REC'\geq
\REC$,with the following bound:
\begin{equation}
      |\REC'(i,j) - \REC(i,j)| \leq \frac{\Delta_{\mathrm{add}}(d_i+d_j)}{R_{ij}-\Delta_{\mathrm{add}}},
\end{equation} 
where $\Delta_{\mathrm{add}} := \max_{i,j\in V}\big(R_{ij}- \frac{1}{2}\big(\frac{1}{d_i+1}+\frac{1}{d_j+1}\big)\big)$.
\end{restatable}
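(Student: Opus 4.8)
The plan is to reduce the entire statement to understanding how effective resistances respond to a single edge addition, since both the numerator $2(p_i+p_j)$ and the denominator $R_{ij}$ of $\REC(i,j)$ are assembled purely from resistance distances. Writing $S_i := \sum_{k\sim i} R_{ik}$ and using $p_i = 1-\tfrac12 S_i$, I would first record the expanded form $\REC(i,j) = (4-S_i-S_j)/R_{ij}$. Two inputs then suffice: (i) Rayleigh's monotonicity principle, which guarantees $R'_{pq}\le R_{pq}$ for every pair once an edge is added, and (ii) a uniform upper bound on how much any single resistance can decrease. Input (i) already yields the qualitative claim $\REC'\ge\REC$ whenever $p_i+p_j\ge 0$: the denominator shrinks while every resistance in $S_i,S_j$ shrinks, so the numerator grows, and both effects push the quotient upward.

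The heart of the proof is input (ii), and it is here that $\Delta_{\mathrm{add}}$ arises. I would establish the elementary lower bound $R_{uv}\ge\tfrac12\mleft(\tfrac1{d_u}+\tfrac1{d_v}\mright)$, valid for every pair in any connected graph, through the electrical maximum principle: injecting a unit current at $u$ and extracting at $v$, current conservation at $u$ reads $\sum_{k\sim u}(V_u-V_k)=1$, and since $V_v$ is the minimum potential each summand is at most $V_u-V_v=R_{uv}$, forcing $1\le d_u R_{uv}$; averaging the symmetric estimates gives the claim. Applying this in $G'$ and noting that a single edge addition raises any degree by at most one, I obtain $R'_{pq}\ge\tfrac12\mleft(\tfrac1{d_p+1}+\tfrac1{d_q+1}\mright)$, so the drop obeys $R_{pq}-R'_{pq}\le R_{pq}-\tfrac12\mleft(\tfrac1{d_p+1}+\tfrac1{d_q+1}\mright)\le\Delta_{\mathrm{add}}$ for every pair, directly by the definition of $\Delta_{\mathrm{add}}$.

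With this per-resistance bound, the estimate assembles quickly. The numerator $4-S_i-S_j$ increases by $(S_i-S_i')+(S_j-S_j')$, a sum of $d_i+d_j$ nonnegative terms each at most $\Delta_{\mathrm{add}}$, hence by at most $(d_i+d_j)\Delta_{\mathrm{add}}$; the denominator satisfies $R'_{ij}\ge R_{ij}-\Delta_{\mathrm{add}}$. Combining these and simplifying the difference of quotients yields a bound of the advertised shape $(d_i+d_j)\Delta_{\mathrm{add}}/(R_{ij}-\Delta_{\mathrm{add}})$, with $R_{ij}-\Delta_{\mathrm{add}}>0$ ensuring it is meaningful.

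I expect the main obstacle to be twofold. The conceptual core is the resistance-drop bound: the degree estimate must be invoked in the perturbed graph $G'$ with the incremented degrees, which is precisely why $\Delta_{\mathrm{add}}$ carries $\tfrac1{d_i+1}$ rather than $\tfrac1{d_i}$. The more delicate bookkeeping occurs when $i$ or $j$ is itself an endpoint of the added edge: its neighbourhood sum then gains an extra term, and I would need to verify that this neither reverses the monotonicity direction nor spoils the $(d_i+d_j)\Delta_{\mathrm{add}}$ count on the numerator.
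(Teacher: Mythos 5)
Your proposal reaches the paper's bound by the same final assembly---a uniform per-pair drop $R_{pq}-R'_{pq}\le\Delta_{\mathrm{add}}$, numerator growth of at most $(d_i+d_j)\Delta_{\mathrm{add}}$, and denominator at least $R_{ij}-\Delta_{\mathrm{add}}$---but you establish the two underlying resistance inequalities by a genuinely different route. The paper derives both from Lov\'asz's commute-time propositions: $C(i,j)=2mR_{ij}$ combined with $C'(i,j)\le(1+\tfrac{1}{m})C(i,j)$ yields $R'_{ij}\le R_{ij}$, and the commute-time lower bound $m\big(\tfrac{1}{d_i}+\tfrac{1}{d_j}\big)\le C(i,j)$, applied in $G'$ with incremented degrees, yields $R'_{ij}\ge\tfrac{1}{2}\big(\tfrac{1}{d_i+1}+\tfrac{1}{d_j+1}\big)$. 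You instead obtain the first from Rayleigh's monotonicity principle and the second from a direct maximum-principle/current-conservation argument; both of your arguments are correct, more elementary, and avoid importing commute times altogether (the spectral quantity $\lambda_2$ in Lov\'asz's bound is only genuinely needed for the companion edge-deletion result, \cref{thm:ResistanceDeletion}). What the paper's route buys is uniformity, since the same three propositions drive both the addition and deletion theorems; what your route buys is self-containedness and transparency---your degree bound is proved as a statement about an arbitrary connected graph and is then visibly applied to $G'$, which is exactly where it is needed.

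The two caveats you flag are real, and the paper's own proof glosses over both. First, when $i$ or $j$ is an endpoint of the inserted edge, $p'_i$ is a sum over an \emph{enlarged} neighbourhood, gaining the term $-\tfrac{1}{2}R'_{ij^*}$, so the monotonicity $p'_i\ge p_i$ (and with it the claim $\REC'\ge\REC$) is not automatic there; the paper asserts it without comment. Second, the concluding difference-of-quotients step, carried out with the stated intermediate bounds, produces an extra term of the form $2(p_i+p_j)\Delta_{\mathrm{add}}/\big(R_{ij}(R_{ij}-\Delta_{\mathrm{add}})\big)$, which is positive whenever $p_i+p_j>0$; the paper simply asserts the final inequality without this bookkeeping, exactly as your plan does. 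So your proposal matches the paper's proof in substance and in level of rigour, while making its weak points explicit rather than silent.
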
%
\noindent%
\highlight{%
The implications of this section are that
\begin{inparaenum}[(i)]
  \item the stability of our topological calculations largely hinges on
    the stability of the functions being used to define said
    filtrations, and 
  \item \emph{all} discrete curvature measures satisfy stability
    properties with respect to changes in graph connectivity,
    making curvature-based filtrations highly robust.
\end{inparaenum}
}

\subsection{Expressivity}\label{sec:Expressivity}

A metric between distributions should be non-zero when the distributions
differ. For this to occur, our metric needs to be able to distinguish
non-isomorphic graphs and be sufficiently expressive. \citet{Horn22a}
showed that persistent homology with an appropriate choice of filtration
is strictly more expressive than $1$-WL, the $1$-dimensional
Weisfeiler--Le(h)man test for graph isomorphism.
A similar expressivity result can be obtained for using resistance
curvature as a node feature~\citep{affinity-gnn}, underlining the
general utility of curvature.
We have the following general results concerning the expressivity or
discriminative power of our topological representations.
\begin{restatable}{theorem}{thmstabilitylowerbound}
  Given two graphs~$F = (V_F, E_F)$ and $G = (V_G, E_G)$ with scalar-valued
  filtration functions~$f, g$, and their respective persistence
  diagrams $D_f, D_g$, we have
  $
    d_B(D_f, D_g) \geq \inf_{\eta\colon E_F \to E_G} \sup_{x \in E_F} |f(x) - g(\eta(x))|,
  $
  where $\eta$ ranges over all maps from~$E_F$ to~$E_G$.
  \label{thm:Expressivity}
\end{restatable}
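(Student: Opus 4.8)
The plan is to read the displayed inequality as the statement that the right‑hand side is a lower bound for $d_B(D_f,D_g)$, and to prove it by converting an optimal bottleneck matching between the two diagrams into a map on edges. First I would record the structural fact that drives the whole argument: in a graph filtration built from an edge function $f$, the topology of the subgraphs $G_i$ can only change at the finitely many values attained by $f$ on the edges. Consequently every finite birth and death coordinate of every point of $D_f$ equals $f(x)$ for some edge $x\in E_F$, and symmetrically every coordinate occurring in $D_g$ equals $g(y)$ for some $y\in E_G$. This correspondence is exactly what lets us translate distances between diagram coordinates into distances between function values on edges.

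Next I would fix a bijection $\eta^{\ast}\colon D_f\to D_g$ realising (or approaching to within $\varepsilon$) the bottleneck distance, so that $\|p-\eta^{\ast}(p)\|_\infty\le d_B(D_f,D_g)$ for every $p\in D_f$. Given an edge $x\in E_F$, I would select a point $p_x\in D_f$ one of whose coordinates equals $f(x)$ (such a point exists by the structural fact), read off the matching coordinate of $\eta^{\ast}(p_x)\in D_g$, and---using the structural fact on the $G$‑side---write that coordinate as $g(y_x)$ for some $y_x\in E_G$. Setting $\eta(x):=y_x$ defines a map $\eta\colon E_F\to E_G$ satisfying
\[
  |f(x)-g(\eta(x))| = \bigl|\,(\text{coordinate of }p_x)-(\text{coordinate of }\eta^{\ast}(p_x))\,\bigr| \le \|p_x-\eta^{\ast}(p_x)\|_\infty \le d_B(D_f,D_g)
\]
for every $x$. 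Taking the supremum over $x\in E_F$ and then the infimum over all maps $E_F\to E_G$ yields $\inf_{\eta}\sup_{x}|f(x)-g(\eta(x))|\le d_B(D_f,D_g)$, which is precisely the claim. Note that it is essential here that $\eta$ is permitted to be an arbitrary (non‑injective, non‑surjective) map: this is what makes the pointwise construction legitimate even when $|E_F|\neq|E_G|$, and it is the feature that makes the bound the natural lower‑bound counterpart to the coarse upper bound of \cref{thm:Stability}.

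The step I expect to be the genuine obstacle is the treatment of points of $D_f$ that the optimal matching sends to the diagonal, together with essential classes such as cycles, whose death coordinate is infinite. For a diagonal match there is no companion point of $D_g$ from which to read an edge $y_x$, so I would instead argue that the relevant coordinate of $p_x$ is still within $d_B(D_f,D_g)$ of a genuine value of $g$---using that the cost of a diagonal match controls the persistence (the gap between the two coordinates) of $p_x$---and choose $\eta(x)$ accordingly; infinite coordinates arising from essential classes I would handle separately, since they are matched among themselves and contribute no finite discrepancy. Making these two cases precise, and checking that the resulting $\eta$ still satisfies the uniform bound, is where the real work lies; the off‑diagonal part of the matching is routine once the coordinate/edge‑value dictionary is in place.
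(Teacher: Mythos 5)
Your core construction---reading each diagram point's coordinates as edge values and pushing a (near-)optimal bottleneck matching $\eta^{\ast}\colon D_f \to D_g$ through this dictionary to obtain a map $\eta\colon E_F \to E_G$ with $\sup_{x}|f(x)-g(\eta(x))| \le d_B(D_f,D_g)$---is the same as the paper's own proof, which likewise writes diagram points as tuples of function values and takes the map ``induced by the bijection of the bottleneck distance.'' Where you go beyond the paper is in flagging the degenerate cases (diagonal matches, essential classes) that the paper's proof silently skips. Unfortunately, the two steps you defer are exactly where the argument breaks, and the first of them cannot be repaired. The point $p_x$ need not exist: your structural fact says every diagram coordinate is an edge value, but you then invoke the \emph{converse}---that every edge value occurs as a diagram coordinate---which is false. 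In the paper's filtration a vertex enters together with its first incident edge, so an edge can enter the filtration without changing homology at all, and its value then appears nowhere in $D_f$ (or only in a zero-persistence pair).

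A concrete example kills both deferred steps at once. Let $F$ be the path $a$--$b$--$c$ with $f(ab)=0$, $f(bc)=50$, and let $G$ be the two-edge path with $g\equiv 0$. Both dimension-$1$ diagrams are empty, and in dimension $0$ we get $D_f=D_g=\{(0,\infty)\}$; if one retains zero-persistence pairs, $D_f$ additionally contains $(50,50)$, which any optimal matching sends to the diagonal at zero cost. Hence $d_B(D_f,D_g)=0$, yet every map $\eta\colon E_F\to E_G$ satisfies $\sup_{x}|f(x)-g(\eta(x))|=50$. This shows that your proposed patch for diagonal matches---``the relevant coordinate of $p_x$ is still within $d_B$ of a genuine value of $g$''---is false: a cheap diagonal match controls the gap $|d-b|$ of $p_x$, not the distance from $f(x)$ to the set of values of $g$. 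More importantly, it shows that under the standard convention in which diagrams include the diagonal (which is needed for $d_B$ to be well defined when the diagrams have different numbers of points), the stated inequality itself fails for edges that are topologically invisible. So you correctly located the crux of the argument, but no completion along these lines can succeed without an additional hypothesis; the paper's proof does not resolve this case either---it simply treats the generic situation in which every point of $D_f$ is matched off-diagonally and every edge value is realized as a coordinate.
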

\cref{thm:Expressivity} implies that topological distances are generally
more discriminative than the distances between the filtration functions.
Thus, calculating topological representations of graphs based on a class
of functions improves discriminative power.
To further understand the expressive power of curvature filtrations, we
analyse strongly-regular graphs, which are often used for studying GNN
expressivity as they cannot be distinguished by \mbox{$k$-WL}, the
$k$-dimensional Weisfeiler--Le(h)man test, if $k \leq 3$~\citep{arvind,
bodnar, Morris21a}. Additionally, we explore how curvature filtrations
can count substructures, an important tool for evaluating and comparing
expressivity~\citep{papp}.
To the best of our knowledge, ours is the first work to explore discrete
curvature and curvature-based filtrations in this context. 

\paragraph{Distinguishing Strongly-Regular Graphs.}
%
Strongly-regular graphs are often used to assess the expressive power of
graph learning algorithms, constituting a class of graphs that are
particularly hard to distinguish.
We briefly recall some definitions.
A connected graph $G$ with diameter $D$ is called
\emph{distance regular} if
there are integers $b_i$, $c_i$, ($0 \le i \le D$) such that for any two
vertices $x, y \in V$ with $d(x, y) = i$, there are $c_i$ neighbours
of $y$ in $k_{i-1}(x)$ and $b_i$ neighbours of $y$ in $k_{i+1}(x)$. For
a distance-regular graph, the intersection array is given by $\{ b_0,
b_1, \dots , b_{D-1}; c_1, c_2, \dots , c_D\}$.
A graph is called \emph{strongly regular} if it is distance regular and
has a diameter of 2~\citep{Van_Dam_2016}. 
We first state two theoretical results about curvature.

\begin{restatable}[Expressivity of curvature notions]{theorem}{thmExpressivityCurvature}
  Both Forman--Ricci curvature and Resistance curvature \emph{cannot}
  distinguish distance-regular graphs with the same intersection array,
  whereas Ollivier--Ricci curvature \emph{can} distinguish the Rook and
  Shrikhande graphs, which are strongly-regular graphs with the same
  intersection array.
\end{restatable}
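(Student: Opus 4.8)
The plan is to treat the two negative claims and the single positive claim separately, exploiting the fact that all quantities feeding into Forman--Ricci and resistance curvature are forced to be edge-independent constants by the combinatorial regularity of distance-regular graphs, while Ollivier--Ricci curvature retains sensitivity to the finer local wiring that the intersection array does not record.

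For Forman--Ricci curvature I would first observe that in any distance-regular graph every vertex has degree $b_0$, and the number of triangles on an edge equals the intersection number $a_1 = b_0 - b_1 - c_1$, which is read directly off the array. Substituting into \cref{eq:Forman} gives $\FRC(i,j) = 4 - 2b_0 + 3(b_0 - b_1 - c_1)$ on \emph{every} edge. Hence two distance-regular graphs sharing an intersection array assign the same constant to all edges; since they also share their vertex and edge counts (both determined by the array), their curvature filtrations, and therefore their persistence diagrams and landscapes, coincide. For resistance curvature the argument is spectral: writing the Laplacian as $L = b_0 I - A$, its pseudoinverse $L^+$ is a polynomial in $A$, and for a distance-regular graph every polynomial in $A$ lies in the Bose--Mesner algebra spanned by the distance matrices $A_0,\dots,A_D$. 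Thus $(L^+)_{xy}$ depends only on $d(x,y)$, so $R_{xy} = (L^+)_{xx} + (L^+)_{yy} - 2(L^+)_{xy}$ is a function of distance alone. Consequently the node curvatures $p_i = 1 - \tfrac{1}{2}\sum_{j\sim i} R_{ij}$ are all equal and $\REC(i,j)$ reduces to a single constant. Because the eigenvalues, their multiplicities, and the decomposition of $A$-polynomials into distance matrices are all fixed by the intersection array, this constant is identical for any two such graphs, so resistance curvature cannot distinguish them.

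For the positive claim I would specialise to the Rook ($K_4 \square K_4$) and Shrikhande graphs, both strongly regular with parameters $(16,6,2,2)$ and intersection array $\{6,3;1,2\}$. Since each graph is edge-transitive, $\ORC$ is constant across its edges, so it suffices to compute one representative edge in each. Fixing the shortest-path metric and the uniform one-hop measure $\mu_v$, I would set up the optimal-transport problem between $\mu_i$ and $\mu_j$ for an edge $(i,j)$ and exploit the fact that the two graphs differ precisely in the subgraph induced on a vertex's neighbourhood: two disjoint triangles $2K_3$ in the Rook graph versus a hexagon $C_6$ in the Shrikhande graph. This difference alters the pairwise shortest-path distances between the support points of $\mu_i$ and $\mu_j$, and hence the minimal transport cost $W_1(\mu_i,\mu_j)$ entering \cref{eq:Ollivier--Ricci}.

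The main obstacle is carrying out the two Wasserstein computations explicitly and certifying that the resulting values are genuinely different rather than coincidentally equal. I would solve the (small, highly symmetric) transport linear programs for one edge in each graph, pairing the two shared common neighbours at zero cost and matching the remaining mass along the distinct local connectivity patterns, then read off two distinct values of $W_1$, which yield two distinct curvatures. The delicate point is establishing a matching lower bound for the optimal transport in at least one graph, e.g.\ via a dual potential, since an upper bound from an explicit coupling alone does not certify optimality; getting this lower bound is where the argument must be most careful.
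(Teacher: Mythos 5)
Your proposal is correct, and on two of the three claims it takes a genuinely different route from the paper. For Forman--Ricci curvature your argument coincides with the paper's: the degree $b_0$ and the triangle count $a_1 = b_0 - b_1 - c_1$ are read off the intersection array, so $\FRC$ is the same constant on every edge of any two graphs sharing the array. For resistance curvature the paper does \emph{not} argue from scratch: it invokes Theorem~A of Biggs (and Koolen et al.) to assert that the resistance distance in a distance-regular graph is determined by the intersection array and the number of vertices, whence $\REC$ is a constant fixed by the array. Your Bose--Mesner derivation is essentially a self-contained proof of that cited fact; it is sound, though the step ``$L^{+}$ is a polynomial in $A$'' deserves one line of justification (e.g.\ write $L^{+} = \sum_{\theta \neq b_0} (b_0-\theta)^{-1} E_\theta$ and note each spectral idempotent $E_\theta$ is a polynomial in $A$ for a connected regular graph). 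What this buys is independence from the literature, at the cost of length. For Ollivier--Ricci the paper's proof is purely computational: it lists the numerically computed edge-curvature values of the Rook and Shrikhande graphs and observes that the multisets differ. Your plan---reduce via edge-transitivity to one representative edge per graph, then solve the two small transport problems exactly with an optimality certificate---is cleaner and more rigorous, and it does go through: for the Rook graph only two support points of $\mu_i,\mu_j$ coincide, so $4/6$ of the mass must move distance at least $1$, giving $W_1 = 2/3$ and $\ORC = 1/3$; for the Shrikhande graph two of the target points lie at distance $1$ from only a single source point, so every matching pays cost at least $2$ on some pair, giving $W_1 = 5/6$ and $\ORC = 1/6$. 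Two small obligations remain: justify edge-transitivity (immediate for the Rook graph; for the Shrikhande graph the affine maps of $\mathbb{Z}_4^2$ preserving the connection set act transitively on arcs), and note that strictly you only need \emph{one} of the graphs to be edge-transitive, since exhibiting a single Shrikhande edge of curvature $\neq 1/3$ already makes the value multisets differ. As an aside, the values tabulated in the paper are not constant within each graph, which is incompatible with edge-transitivity and reflects the particular probability measure and implementation used there; your analytical route avoids this issue entirely.
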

The Rook and Shrikhande graph \emph{cannot} be distinguished by
$2$-WL~\citep{gsn, Morris21a}.
However, OR curvature is sensitive to differences in their
first-hop peripheral subgraphs~\citep{khop-gnn}, thus
distinguishing them. This result shows the limitations of Forman--Ricci
and Resistance curvature, as well as the benefits of using
Ollivier--Ricci curvature.
We show further experiments with curvature notions on strongly-regular
graphs in the experimental section, observing improvements whenever
we use them as filtrations.

\paragraph{Counting Substructures.}
%
Evaluating the ability of curvature to encode structural information is
a crucial aspect for understanding its expressivity and validating its
overall utility in graph learning. It has previously been shown that
incorporating structural information of graphs can extend the
expressive power of message-passing neural networks~\citep{powerful_gnn,
geerts, local_heirarchy}. Additionally, \citet{gsn} show how GNNs can be
\begin{inparaenum}[(i)]
  \item strictly more expressive than $1$-WL by counting local
    isomorphic substructures~(e.g.\ cliques), and
  \item exhibit predictive performance improvements when adding such
    substructure counts to the node features.
\end{inparaenum}
For instance, some strongly-regular graphs can be distinguished by
counting $4$-cliques. Discrete curvature measures are informed by these
local substructures and have been shown to improve expressivity beyond
1-WL~\citep{affinity-gnn} when included as a node feature. 
Nevertheless, there is limited work exploring what substructure information curvature
carries, which would allow us to describe the expressivity of the
measure.
Moreover, persistent homology can track the number of
cycles over the filtration of interest, allowing additional structural
information to be encoded at multiple scales. Thus, we will explore
the extent to which substructures can be counted for different 
curvatures~(with and without a topological component) in a 
subsequent experiment, providing evidence on the expressive 
power of curvature filtrations. See \cref{tab:Substructures} for our
experimental results, and \cref{app:sec:CountingSubstructs} for a more
detailed discussion on the tendencies of each curvature notion when 
counting substructures.

\subsection{Choosing a Curvature Notion in Practice} \label{sec:ChoosingCurvature}
%
In \cref{sec:Stability}, we showed that all three of our prototypical 
curvature notions exhibit advantageous stability properties, thus
implying that they may all be used \emph{reliably} within our method to measure
the distance between two sets of graphs. However, which curvature should
be chosen in practice?
In general, we find that the answer to this question lies at the
intersection of \emph{expressivity} and \emph{scalability}, but depends
ultimately on the nature of the experiment at hand. Nevertheless, we aim
to provide an intuition for all three notions, along with a general
recommendation. We hope this, in conjunction with the experimental and
computational complexity results in \cref{sec:Experiments} and
\cref{app:sec:Computational Complexity} will help practitioners in
making an optimal choice.

\paragraph{Comparison.}
%
Forman--Ricci is arguably the simplest and most local notion of curvature. Though
limited in expressivity when compared to Ollivier--Ricci curvature, it is
by far the most efficient to compute. Resistance curvature, by contrast, is
the most global notion, making it sensitive to large substructures. However,
computing the effective resistance metric on a graph requires inverting
the Laplacian, making it far less efficient than Forman and
Ollivier--Ricci, especially for large graphs.
Finally, we have Ollivier--Ricci curvature, which we have found to be the most
expressive based on its ability to
\begin{inparaenum}[(i)]
  \item distinguish strongly-regular graphs, and
  \item count substructures. 
\end{inparaenum}
It is also the most versatile, given the option to adapt the underlying
probability measure; this comes at the cost of lower computational
performance in comparison to Forman---Ricci curvature, though.\\[0.25\baselineskip]
\highlight{%
Given its high expressivity, as well as its overall experimental and
computational performance, we recommend using \textbf{Ollivier--Ricci}
curvature whenever feasible.
}

\section{Experiments} \label{sec:Experiments}

We have proven the theoretical stability of discrete curvature notions
under certain graph perturbations. We also illustrated their ability to
distinguish distance-regular and strongly-regular graphs.
Subsequently, we will discuss empirical experiments to evaluate these
claims and to further test the utility of our methods. 

\subsection{Distinguishing Strongly-Regular Graphs}
%
\begin{wraptable}[10]{r}{0.50\linewidth}
  \vspace{-\intextsep}
  \sisetup{
    table-format    = 1.2,
    round-mode      = places,
    round-precision = 2,
    detect-all      = true,
    detect-weight   = true,
  }%
  \centering%
  \caption{%
    Success rate~($\uparrow$) of distinguishing pairs of
    strongly-regular graphs when using either raw curvature
    values or a curvature filtration.
  }%
  \label{tab:Strongly-regular graphs}%
  \let\b\bfseries
  \resizebox{\linewidth}{!}{%
    \begin{tabular}{lSSSSS}
      \toprule
        Method & \texttt{sr16622} & \texttt{sr261034} & \texttt{sr281264} &\texttt{sr401224} \\
      \midrule
      {\FRC}              &  0.00 &  0.00 &  0.00 &  0.00 \\
      {\ORC}              &\b1.00 &  0.78 &\b1.00 &  0.00 \\
      {\REC}              &  0.00 &  0.00 &  0.00 &  0.00 \\
      \midrule
      Filtration ($\FRC$) &\b1.00 &  0.20 &  0.00 &\b0.93 \\
      Filtration ($\ORC$) &\b1.00 &\b0.89 &\b1.00 &\b0.93 \\
      Filtration ($\REC$) &\b1.00 &  0.20 &  0.00 &\b0.93 \\
      \bottomrule
    \end{tabular}%
  }%
\end{wraptable}
%
In addition to the theoretical arguments outlined, we explore the
ability of our method to distinguish strongly-regular graphs in a subset
of data sets, i.e.\ \texttt{sr16622}, \texttt{sr261034},
\texttt{sr281264}, and \texttt{sr401224}. These data sets are known to
be challenging to classify since they cannot be described in terms of
the $1$-WL test~\citep{bodnar}.
Our main goal is \emph{not} to obtain the best accuracy, but to show
how the discriminative power of discrete curvature can be improved by
using it in a filtration context. \Cref{tab:Strongly-regular graphs}
depicts the results of our classification experiment. We perform a pairwise
analysis of \emph{all graphs} in the data set, calculating distances
based on histograms of discrete curvature measurements, or based
on the bottleneck distance between persistence diagrams~(`Filtrations'). 
Subsequently, we count all non-zero distances~($> \num{1e-8}$ to correct
for precision errors).
Our main observation is that combining TDA with curvature is
always better than or equal to curvature without TDA.
Similar to our theoretical predictions, both resistance curvature and
Forman curvature fail to distinguish any of the graphs without TDA. We
therefore show the benefits from an expressivity point of view for using
discrete curvature as a filtration.
Notably, we achieve the best results with
OR curvature, which is particularly flexible since it
permits changing the underlying \emph{probability measure}.
Using a probability measure based on random
walks~(see \cref{app:OR}) takes into account higher-order
neighbourhoods and improves discriminative power~(on \texttt{sr261034}, the
pairwise success rate drops to $0.0$/$0.2$ with raw/TDA values,
respectively, if the uniform probability measure is used).

\subsection{Expressivity experiments with the BREC dataset}
We evaluate discrete curvatures and their filtrations on the BREC data set, which was recently introduced to evaluate GNN expressiveness~\citep{wang2023better}. The data set consists of different categories of graph pairs~(Basic, Regular, and Extension), which are distinguishable by 3-WL but not by 1-WL, as well as Strongly-Regular (STR) and CFI graph pairs, which are indistinguishable using 3-WL. We explore the ability of curvature filtrations to distinguish these graph pairs and compare them to  substructure counting, $S_3$ and $S_4$, which involves enumerating all 3-node/4-node substructures around nodes in combination with the WL algorithm.
These approaches, unlike discrete curvature, have limited practical applications due to their high computational complexity.
Similar to our experiments on strongly-regular graphs, we calculate Wasserstein distances based on histograms of OR curvature measurements between the pairs of graphs. Subsequently, we count all non-zero distances~($> \num{1e-8}$ to correct
for precision errors).
Our main observations from \Cref{tab:BREC graphs} are that curvature \emph{can} distinguish graphs which are 3-WL indistinguishable. Additionally, we observe improvements in success rate using the filtration on the Basic, Regular, STR and Extension graph pairs. Moreover, our curvature-based approach performs competitively and sometimes even better than $S_4$, which has been shown to be extremely effective in graph learning tasks~\citep{gsn}.
Despite its empirical prowess, $S_4$ is computationally expensive, making it an \emph{infeasible} measure in many applications. Discrete curvature and its filtrations, by contrast, scale significantly better.

\begin{table}[tbp]
\centering
\caption{
    Success rate~($\uparrow$) of distinguishing pairs of graphs in the BREC dataset when using different discrete curvatures and their filtrations.
  }
\label{tab:BREC graphs}
  \sisetup{
    table-format    = 1.2,
    round-mode      = places,
    round-precision = 2,
    detect-all      = true,
    detect-mode     = true,
  }%
  \scriptsize
  \let\b\bfseries
\begin{tabular}{lSSSSS}
      \toprule
        Method & {Basic (56)} & {Regular (50)} & {STR (50)} & {Extension (97)} & {CFI (97)} \\
      \midrule
      1-WL             & 0.00        & 0.00          & 0.00      & 0.00            & 0.00 \\
      3-WL             & \b1.00        & \b1.00          & 0.00      & \b1.00            & \b0.59 \\
       \midrule
      $S_3$               & 0.86       & 0.96         & 0.00      & 0.05           & 0.00 \\
      $S_4$               & 1.00        & 0.98         &\b1.00      & 0.84           & 0.00 \\
      \midrule
      {\ORC}          & 1.00        & 0.96         & 0.06     & 0.93           & 0.00  \\
      {\FRC} & 0.96        & 0.92          & 0.00     &     0.52       &  0.00 \\
      {\REC} &\b1.00        &\b1.00          & 0.00     &   \b1.00        &  0.04 \\
      \midrule
      Filtration ($\ORC$) &\b1.00        &\b1.00          & 0.08     &   0.95          &   0.00 \\
      Filtration ($\FRC$) & 0.98        & 0.96          & 0.04     &    0.61        &  0.00 \\
      Filtration ($\REC$) &\b1.00        &\b1.00          & 0.04     & \b1.00          &  0.04 \\
      \bottomrule
\end{tabular}
\end{table} 

\subsection{Behaviour with Respect to Perturbations}\label{sec:Perturbations}

\begin{wraptable}[14]{r}{0.5\linewidth}
  \vspace{-\intextsep}%
  \caption{%
    Pearson correlation~($\uparrow$) of measures when adding/removing
    edges.
  }%
  \label{tab:Correlation}%
  \sisetup{
    table-format            = 1.3(3),
    separate-uncertainty    = true,
    retain-zero-uncertainty = true,
    detect-all              = true,
  }%
  \centering%
  \resizebox{\linewidth}{!}{%
    \begin{tabular}{lSS}
    \toprule
    Measure             &           {Adding Edges} & {Removing Edges}\\
    \midrule          
    Laplacian           &           0.457 (0.013)  &           0.420 (0.000)\\
    Clust.\ Coeff.\     &           0.480 (0.012)  &           0.504 (0.020)\\
    Degrees             &           0.761 (0.003)  &           0.995 (0.000)\\
    \midrule                    
    $\FRC$              &           0.420 (0.000)  &           0.432 (0.003)\\
    $\ORC$              &           0.903 (0.005)  &           0.910 (0.002)\\
    $\REC$              &           0.420 (0.004)  &           0.441 (0.005)\\
    \midrule          
    Filtration ($\FRC$) &           0.571 (0.006)  &           0.996 (0.006)\\
    Filtration ($\ORC$) & \bfseries 0.997 (0.000)  & \bfseries 0.970 (0.005)\\
    Filtration ($\REC$) &           0.730 (0.005)  &           0.954 (0.008)\\
    \bottomrule
  \end{tabular}
  }
\end{wraptable}
%
To explore the behaviour of curvature descriptors under perturbations,
we analyse the correlation of our metric when adding and removing edges
in the `Community Graph' data set: we increase the fraction of edges
added or removed from $0.0$ to $0.95$, measuring the distance between
the perturbed graphs and the original graphs for each perturbation
level. Following \citet{OBray22a}, we require a suitable distance
measure to be highly correlated with increasing amounts of perturbation.
We compare to current approaches that use descriptor functions with MMD.
As \cref{tab:Correlation} shows, a curvature filtration yields a higher
correlation than curvature in combination with MMD, showing the benefits
of employing TDA from a stability perspective. Additionally, curvature
filtrations improve upon the normalized Laplacian and clustering
coefficient~(again, we used MMD for the comparison of these distributions). OR curvature exhibits particularly strong results when
adding/removing edges, even surpassing the local degrees of
a graph~(which, while being well-aligned with perturbations of the
graph structure, suffer in terms of overall expressivity).

\subsection{Counting Substructures}

The ability of a descriptor to count local substructures is important for evaluating its expressive power~\citep{local_graph_parameters}. 
We follow \citet{counting_subgraphs_gnn}, who assess the ability of GNNs to count
substructures such as triangles, chordal cycles, stars and tailed
triangles.
This is achieved by generating regular graphs with random edges removed,
and counting the number of occurrences of each substructure in a given
graph.
To assess the power of curvature to capture such local structural
features, we use the same experimental setup and pass the edge-based
curvatures through a simple $1$-layer MLP to output the substructure
count. Additionally, we evaluate the effect of using curvature as
a filtration.

\begin{wraptable}[14]{l}{0.5\linewidth}
  \centering%
  \sisetup{
    table-format         = 1.2,
    round-mode           = places,
    round-precision      = 2,
    detect-all           = true,
    detect-weight        = true,
  }%
  \caption{%
    MAE~$(\downarrow)$ for counting substructures based on raw curvature
    values and curvature filtrations.
    The `Trivial Predictor' always outputs the mean training target.
  }%
  \label{tab:Substructures}
  \resizebox{\linewidth}{!}{%
    \begin{tabular}{lSSSS}
      \toprule
    Method  & {Triangle}        & {Tailed Tri.}     & {Star}            & {$4$-Cycle}         \\
    \midrule
    Trivial Predictor & 0.878829 & 0.897183  &  0.813312  &  0.92724 \\
    GCN                                  & 0.4186          & 0.3248
                                         & \bfseries 0.1798
                                         & \bfseries 0.2822         \\
    \midrule
    \FRC                & 0.5383          & 0.5575          & 0.7217          & 0.5259          \\
    \ORC                & 0.3305          & 0.3080          & 0.3978          & 0.3145          \\
    \REC                & 0.5859          & 0.4990          & 0.7222          & 0.4707          \\
    \midrule
    Filtration (\FRC)   &    0.4495       &    0.5170       &    0.4851       &      0.6021      \\ 
    Filtration (\ORC)   & \bfseries 0.2321         & \bfseries 0.2395          & 0.3393          & 0.3089           \\ 
    Filtration (\REC)   &     0.4745      &    0.4777      &     0.3571      &      0.4213      \\ 
      \bottomrule
    \end{tabular}%
  }%
\end{wraptable}
%
\Cref{tab:Substructures} shows our experimental results, reported on the
pre-defined test split defined by \citet{counting_subgraphs_gnn}.
In comparison to Graph Convolutional Networks~\citep{kipf}, 
we find that OR curvature exhibits improved performance in counting
small local structures such as triangles and tailed triangles; this is
a surprising finding given the smaller computational footprint of this
curvature formulation~(in comparison to a GCN). Moreover, we find that
the OR curvature performs better than both the Forman curvature and the
resistance curvature for the different substructures in the data. We
also observe that combining curvature with TDA almost always improves
upon using the curvature alone, the only exception being Forman
curvature for counting \mbox{$4$-cycles}.
We leave a more detailed investigation of these phenomena for future work.

\subsection{Synthetic Graph Generative Model Evaluation}
 
To have a ground truth for a graph distribution, we tested our
metric's ability to \emph{distinguish graphons}. 
Following the approach suggested by
\citet{graphon_experiment}, we generate four graphons,
$W_1(u, v) = uv$, $W_2(u, v) = \exp\{-\max(u, v)^{0.75}\}$, $W_3(u, v)
= \exp \{-0.5 * ( \min(u, v) + u^{0.5} + v^{0.5})\}$ and $W_4(u, v) = \|u
- v \|$. Sampling from these graphons produces dense graphs, and we
control their size to be between $9$ and $37$ nodes, thus ensuring
that we match the sizes of molecular graphs in the \texttt{ZINC}
data set~\citep{zinc}, an important application for generative models. 
  
We perform experiments by considering all combinations of three and four
graphons. We generate distances between graphs with our method as well as other kernel-based approaches, and use
spectral clustering to separate the distributions. We measure the
performance of the algorithms using the Adjusted Rand Index~(ARI) of the
predicted clusters, comparing to three state of the art, kernel-based
approaches:
\begin{inparaenum}[(i)]
  \item Wasserstein Weisfeiler--Le(h)man graph kernels~\citep{wwlgk},
  \item graph neural tangent kernels~\citep{gntk}, and
  \item Tree Mover's Distance~\cite{chuang2022tree}.
\end{inparaenum}
From \Cref{fig:1}, we find that a filtration based on OR curvature is better able to
distinguish and cluster graphons than the previously-described approaches 
based on kernels and it performs best for all sets of graphons.
We also observe that OR curvature performs better than other discrete
curvatures, with resistance curvature achieving higher ARI than Forman
curvature.
Notice that unlike these kernel approaches, our method can be easily
extended to provide a proper metric between distributions of graphs. 

\begin{figure}[tbp]
  \centering
  \subcaptionbox{%
    Curvature distinguishes graphon data sets\label{fig:1}%
  }{%
    \pgfplotstableread[col sep=comma]{data/graphon_scores_kernel.csv}\table

\resizebox{0.475\linewidth}{!}{%
\begin{tikzpicture}
\begin{axis}[
    ybar,
    bar width=.28cm,
    width=\textwidth,
    height=10cm,
    ymin = -0.005,
    xtick = data,
    xticklabels ={{$W_1,W_2,W_3$},{$W_2,W_3,W_4$},{$W_1,W_3,W_4$}, {$W_1,W_2,W_4$},{$W_1,W_2,W_3$,$W_4$}},
    x tick label style={font=\normalsize,yshift=0cm,align=center},
    ylabel={\Large Adjusted Rand Index},
    legend style={area legend, at={(0.898,1)}, anchor=north,legend columns=2},
    xlabel=\Large Graphons,
    every axis plot/.append style={fill},
    cycle list name=Dark2-8
    ]
 \addplot +[postaction = {pattern=crosshatch}] table [x expr=\coordindex, y=or] {\table};
        \addlegendentry{$\ORC$}
    \addplot+[postaction = {pattern=crosshatch}] table [x expr=\coordindex, y=resistance] {\table};
        \addlegendentry{$\REC$}
    \addplot+[postaction = {pattern=crosshatch}] table [x expr=\coordindex, y=forman] {\table};
        \addlegendentry{$\FRC$}
    \addplot table [x expr=\coordindex, y=wwlgk] {\table};
        \addlegendentry{wwlgk}
     \addplot table [x expr=\coordindex, y=gntk] {\table};
    \addlegendentry{gntk}
    \addplot table [x expr=\coordindex, y=TMD] {\table};
    \addlegendentry{tmd}
        \legend{$\ORC$,$\REC$,$\FRC$,wwlgk, gntk, tmd}
\end{axis}
\end{tikzpicture}
}%
    \vspace{0.1cm}
  }\quad%
  \subcaptionbox{%
    Curvature distinguishes bioinformatics data sets\label{fig:2}%
  }%
  {%
\includegraphics[width=0.475\linewidth, scale=0.28]{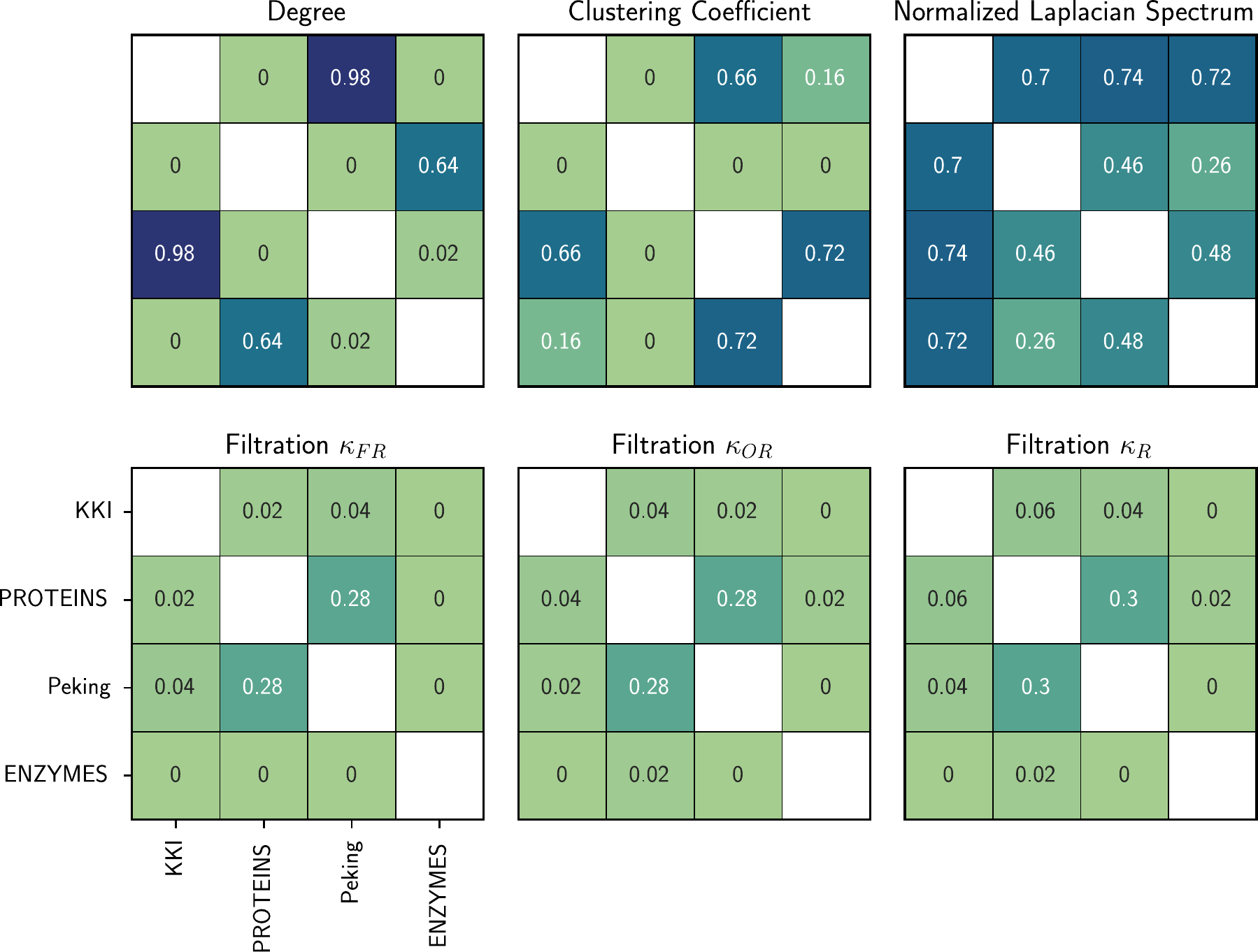}
  }%
  \caption{%
    \subref{fig:1} Adjusted Rand Index~($\uparrow$) for clustering sets of four
    graphons. We compare our curvature filtrations (\legendbox{black})
    to kernel-based methods.
    \subref{fig:2} Permutation testing values~$(\downarrow)$ for distinguishing
    different bioinformatics data sets. Position~($i, j$) in each matrix
    denotes a permutation test between data set~$i$ and data set~$j$.
  }
\end{figure}

\subsection{Real-World Graph Generative Model Evaluation}

By converting the generated persistence diagrams into
a \emph{persistence landscape}, we can generate an average topological
descriptor for all graphs in
a distribution~\citep{Bubenik15}.
%
This allows us to calculate norms between graph distributions, making it
possible to perform two-sample and permutation testing, unlike
a majority of kernel-based approaches, which provide a distance between
\emph{all} individual graphs, or would require MMD to assess mean
similarities.
We randomly sample ten graphs from four different bioinformatics
data sets, i.e.\ KKI, PROTEINS, Peking, and ENZYMES~\citep{tudatasets}.
We measure the distance between two data sets using the $L^P$ norm
between their average persistence landscapes. We then permute graphs from both samples, 
randomly selecting sets of equal size, measure their respective distances, and
finally aggregate the fraction of distances which are higher than the original. 
A low fraction indicates that distances are \emph{lower} for permutations 
than between the original sets of graphs, suggesting that the metric can 
distinguish between the two distributions. We compare our approach to previous 
methods that combine graph statistics with MMD.
Using a significance threshold of $p < 0.05$, we see in \Cref{fig:2}
that both Forman and the OR curvature are able to distinguish \emph{all but
one pair of data sets, an improvement over all the other approaches}. In
general, we find that fractions are lower using curvature filtrations
than graph statistic based approaches, demonstrating the utility of our
approach.

\section{Conclusion}

We have described the first thorough analysis
of both \emph{stability} and \emph{expressivity} of discrete curvature
notions and their filtration formulations on graphs.
We believe this to be important for the community in multiple contexts, ranging from
improving expressivity of GNNs to understanding the robustness of curvature on graphs.
Using curvature filtrations and their topological
descriptors~(here: \emph{persistence landscapes}), we develop a new metric to measure
distances between graph distributions.
Our metric can be used for evaluating graph generative models, is robust
and expressive, and improves upon current approaches based on graph
descriptor and evaluator functions.
We have also demonstrated clear advantages over state-of-the-art
methods that combine graph statistics with MMD, providing instead 
a metric with
\begin{inparaenum}[(i)]
 \item well-understood parameter and function choices,
 \item stability guarantees,
 \item added expressivity, and
 \item improved computational performance.
\end{inparaenum}
Most notably, we scale \emph{significantly} better for large populations
of molecular-sized graphs~(see \cref{app:sec:Computational Complexity}
for more details), which we consider crucial for current graph generative
model applications.
We hope that our pipeline will provide a principled, interpretable,
and scalable method for practitioners to use when evaluating the
performance of graph generative models. 

Future work could explore other representations~\citep{Adams17a,
Rieck20c}, focus on different filtration
constructions~\citep{Cohen-Steiner09}, new curvature measures, or
further extend our stability and expressivity results~(for instance to
the setting of weighted graphs).
Given the beneficial performance and flexibility of Ollivier--Ricci curvature in our experiments, we believe that changing---or \emph{learning}---the probability measure used in its calculation could lead to further improvements in terms of expressivity, for example.
Another relevant direction involves incorporating node and edge
features into the distance measure and applying the model to specific
use cases such as evaluating molecule generation. We envision that this
could be done using bi-filtrations or by considering node features
for the curvature calculations.

\begin{ack}
The authors want to thank the anonymous reviewers for their comments, which helped us improve the paper.
We are also grateful for the area chair who believed in this work.
The authors are grateful to Fabrizio Frasca and Leslie O'Bray for valuable feedback on early versions of the manuscript.
J.S. is supported by the UKRI CDT in AI for Healthcare http://ai4health.io (Grant No. P/S023283/1).
M.B. is supported in part by the ERC Consolidator grant No. 724228 (LEMAN) and the EPSRC Turing AI World-Leading Researcher Fellowship.
B.R.\ is supported by the Bavarian State government with
funds from the \emph{Hightech Agenda Bavaria}.
He wishes to dedicate this paper to his son Andrin.
\end{ack}

\bibliography{main}
\bibliographystyle{plainnat}

\clearpage

\appendix
\onecolumn

\section{Pseudocode}\label{app:pseudocode}
Here we give pseudocode for various parts of the method, highlighting the most relevant aspects of using curvature filtrations to evaluate graph generative models. First, we outline a crucial part of our evaluation framework in \cref{alg:compute_average}: how we compute summary topological descriptors for sets of graphs. This algorithm, taken from \citet{Bubenik15}, assumes a list of precomputed persistence landscapes, one for each graph. \cref{alg:compare_graph_sets}, on the other hand, outlines our procedure for generating a distance between two sets of graphs using their summary topological descriptors.

\begin{algorithm}
\caption{Compute Average of Persistence Landscapes}
\label{alg:compute_average}
\begin{algorithmic}
\Require{$\Lambda$ is a list of persistence landscapes}
\Ensure{$\Bar{\Lambda}$ is the average persistence landscape}
\Function{ComputeAverageLandscape}{$\Lambda$}
    \State $n \gets$ length of $\Lambda$
    \State $D \gets$ maximum homology degree occurring in $\Lambda$
    \State $\Bar{\Lambda} \gets$ empty persistence landscape with $D$ homology dimensions.
    \State $\lambda^i(k,t) \gets$ the piecewise-linear function encoding hom-deg $k$ contained in $\lambda^i \in \Lambda$.  
    \State $R\gets$ maximal domain for each function contained in $\Lambda$.
    
    \State $k \gets 0$ 
\While{$k \leq D$ }
    \For{$t \in R$}  
       \State $\Bar{\Lambda}(k, t) \gets \frac{1}{n} \sum_i^n \lambda^i(k,t)$
        \EndFor
    \State $k \gets k+1$
\EndWhile
\State \textbf{return} $\Bar{\Lambda}$
\EndFunction
\end{algorithmic}
\end{algorithm}

Recall that persistence landscapes are \emph{a collection of piecewise linear functions} that encode the homological information tracked by a specified filtration. Thus, to compute the average landscape $\Bar{\Lambda}$ at each dimension, we simply sum the piecewise linear functions, and divide by the total number of landscapes in consideration. To understand our method for comparing sets of graphs, and thus evaluating graph generative models, based on average persistence landscapes see \cref{alg:compare_graph_sets}.

\begin{algorithm}
\caption{Computing Distance between Two Sets of Graphs}
\label{alg:compare_graph_sets}
\begin{algorithmic}
\Require $\mathcal{G}_1$ is a set of graphs.
\Require $\mathcal{G}_2$ is a set of graphs.
\Require $D$ to be the maximum homology degree to be computed.
\Require $f$ is a function to generate a filtration for persistent homology computations.
\Ensure{$Dist$} is the distance between $\mathcal{G}_1,\mathcal{G}_2$. 

\vspace{0.4cm}
\Function{ComputeSetDistance}{$\mathcal{G}_1,\mathcal{G}_2$}
\State $\Lambda_1 \gets$ is an empty list.
\State $\Lambda_2 \gets$ is an empty list.
\State $DistVec \gets$ is an empty $D+1$-dimensional vector. 
\For{$G\in \mathcal{G}_1$}
    \State $L\gets \text{ComputePersistenceLandscape}(G,f)$
    \State $AddItem(\Lambda_1,L)$
\EndFor

\For{$G\in \mathcal{G}_2$}
    \State $L\gets \text{ComputePersistenceLandscape}(G,f)$
    \State $AddItem(\Lambda_2,L)$
\EndFor

\State $\Bar{\Lambda_1} \gets \text{ComputeAverageLandscape}(\Lambda_1)$ 
\State $\Bar{\Lambda_2} \gets \text{ComputeAverageLandscape}(\Lambda_1)$ \Comment{By \cref{alg:compute_average}.}

\State $k \gets 0$ 
\While{$k \leq D$ }
    \State  $DistVec[k] \gets |supNorm(\Bar{\Lambda_1}(k,t)) -  supNorm(\Bar{\Lambda_2}(k,t))|$
\EndWhile

\State $Dist \gets ||DistVec||_2$
\State \textbf{return} $Dist$

\EndFunction

\vspace{0.4cm}
\Function{ComputePersistenceLandscape}{$G,f,D$}
    \State $P \gets$ compute persistence diagram for $G$ using $f$ for homology degree $k\in \{0,\cdots, D\}$. 
    \State $L \gets$ transform $P$ into a persistence landscape. \Comment{See \citep{Bubenik15} for implementation.} 
    \State \textbf{return} $L$
\EndFunction
\end{algorithmic}
\end{algorithm}

\section{Additional Proofs on Stability and Expressivity}\label{app:Proofs}

\subsection{Stability Proofs}

\thmstabilityupperbound*

\begin{proof}
  Considering the calculation of persistence diagrams based on
  scalar-valued filtrations functions, every point in the persistence
  diagram~$D_f$ can be written as a tuple of the form~$(f(e_F), f(e_F'))$,
  with $e_F, e_F' \in E_F$; the sample applies for $D_g$. The inner distance
  between such tuples that occur in the bottleneck distance calculation
  can thus be written as
  \begin{equation}
    \|(f(e_F), f(e_F')) - (g(e_G), g(e_G'))\|_\infty.
  \end{equation}
  The maximum distance that can be achieved using this expression is
  determined by the maximum variation of the functions, expressed via
  $\mathrm{dis}(f, g)$ and $\mathrm{dis}(g, f)$, respectively.
\end{proof}

\paragraph{Edge Filtrations versus Vertex Filtrations} \label{app:sec:edge_v_vertex_filtrations}
Our results are structured to address filtrations built by a function on the \emph{edges} of a graph
$G = (V,E)$, $f\colon E \to \reals$. This matches our notions of
discrete curvature, which are also defined edge-wise. $~f$ gives an explicit ordering on $E$ and thus
an induced ordering on $V$ given by:
$$ v \leq v' \iff  \sum_{e\in E_v} f(e) \leq \sum_{e'\in E_{v'}} f(e')$$

where $E_x$ is the set of edges incident to $x\in V$. However, one can also define a filtration directly over \emph{vertices} with a scalar valued function $h\colon V \to \reals$. By assumption,~$h$ can attain only a finite number of values~, call the unique values $b_1, b_2, \dots b_k$. Thus, we can also compute a filtration
$
\emptyset \subseteq G_0 \subseteq G_1 \ldots \subseteq G_{k-1} \subseteq G_k = G,
$
where each $G_i := (V_i, E_i)$, with $V_i := \{ v \in V \mid h(v) \leq
b_i \}$ and $E_i := \{ e \in E \mid \max_{v \in e} h(v) \leq b_i \}$. Similarly, the
explicit ordering on $V$ given by$~h$ induces an ordering on $E$:
$$ e \leq e' \iff  \max_{v\in e} h(v) \leq \max_{v' \in e'} h(v')$$

The key idea here is that \emph{either choice gives rise to an ordering of both edges, and vertices}
that are used to calculate persistent homology of the graph. This means that the arguments for \cref{thm:Stability}
and \cref{thm:Expressivity} also bound the bottle-neck distance for persistence diagrams generated
using vertex filtrations.

\paragraph{Graph perturbations.}
Here we explicitly specify a common framework used in the proofs for
stability of curvature functions. As mentioned in the main text, we
consider perturbations to \emph{unweighted}, \emph{connected} graphs
$G=(V,E)$, with $|V|=n$ and $|E|=m$. In the case of \emph{edge
addition}, let $i*$ and $j*$ be arbitrary vertices that we wish to
connect with a \emph{new} edge, forming our new graph $G' = (V,E')$
where $E' = E \cup (i*,j*)$ such that $|E'| = m+1$. For \emph{edge
deletion}, we similarly let $(i*,j*)\in E$ be the edge we delete such
that $E'\subset E$ and $|E'|=m-1$. Moreover, we only consider edges
$(i*,j*)$ that leave $G'$ connected. 

\subsubsection{Forman--Ricci Curvature}

\thmFormanAdditionAndDeletion*

\begin{proof}
  We first handle the case of \textbf{edge addition}, using the graph perturbation framework specified above.
 By definition $\FRC(i,j)$ depends \emph{only} on the degrees of the source and target $(i,j)\in E$ and the number of triangles formed using
$(i,j)$, $|\#_{\Delta_{ij}}|= |N(i)\cap N(j)|$, where $N(i)$, $N(j)$ are the set of neighbouring nodes for $i,j$ respectively. This
is a local computation– all relevant information can be computed in the
subgraph surrounding the inserted edge $(i*,j*)$. Thus, in order to understand stability of $\FRC(i,j)$, we need to understand how $N(i)$ and $N(j)$ change under graph perturbations. For our new graph $G'$, the only edges with potential to change their curvature lie in the set:

$$E_{*} := \{(u,v) \in E| u,v \in N(i*)\cup N(j*)\}$$

For the new edge $(i*,j*)\in E_{*}$, we can directly compute $\FRC(i*,j*)$ based on the original structure of the graph. However, in terms of stability we are interested in the other members of $E_{*}$, i.e. edges in the original graph. The analysis of $E_{*}$ can be split into two cases: one of the nodes is $i*$ or $j*$ or neither is. 

\emph{Case 1}: WLOG assume $(i,j) = (i*,j)\in E_{*}$.
Clearly, $d'_{i} = d_{i}+1$. As for $|\#'_{\Delta_{ij}}|$, this can maximally be increased by 1 in the case that $j*\in N(j)$, else the triangle count stays the same. 

\emph{Case 2}: Let $(i,j)\in E_{*}$ where $i,j \in V\setminus \{i*,j*\}$. In this case, there is no change to the degree nor the number $|\#'_{\Delta_{uv}}|$. 

Thus \emph{Case 1} defines the bounds which are dictated as follows: if $(i*,j*)$ forms a new triangle, our curvature can increase by 2, and if no triangle is formed the curvature can decrease by 1 in response to the increased degree. Thus we can bound $\FRC'(i,j) := 4 - d'_i-d'_j + 3|\#'_{\Delta_{ij}}|$ as follows:
$$ \FRC(i,j) -1 \leq \FRC'(i,j) \leq \FRC(i,j) +2 $$

The case of \textbf{edge deletion} can be handled similarly.
Again, we need only consider the edges in $E_{*}$, as defined in the proof above, and can make the same case argument. 

\emph{Case 1} WLOG assume $(i,j) = (i*,j)\in E_{*}$.
Clearly, $d'_i = d_i-1$. As for $|\#'_{\Delta_{ij}}|$, this can maximally be decreased by 1. 

\emph{Case 2}:  Let $(i,j)\in E_{*}$ where $i,j \in V\setminus \{i*,j*\}$. Degree and number of triangles do not change in response to the perturbation.

Again, \emph{Case 1} gives rise to the following bounds hold for $\FRC'$:

$$ \FRC(i,j) -2 \leq \FRC'(i,j) \leq \FRC(i,j)+1 $$

\end{proof}

\subsubsection{Ollivier--Ricci Curvature}

The definition of $\ORC$ establishes a relationship between the graph metric $d_G$, the Wasserstein distance $W_1$, the probability distributions $\mu_i,\mu_j$ at nodes $i,j$ and the curvature. Given that we are considering unweighted, and connected graphs we know that $(V,d_G)$ is a well-defined metric space and therefore $W_1$ (as defined in \cite{OLLIVIER2007643}) defines the $L_1$ transportation distance between two probability measures $\mu_i,\mu_j$ with respect to the metric $d_G$. This is relevant for a much larger class of graph metrics than just the standard choice of the shortest path distance. We use results from \cite{OLLIVIER2007643} and the metric properties of $W_1$ and $d_G$ on graphs to bound the potential changes in $\ORC$ following an edge perturbation. 

\begin{lemma}{\label{ORlemma}}
  Consider the triple $\mathcal{G}= (G, d_G,\mu)$. Let $\delta_i$ denote the Dirac measure at node $i$ and $J(i)$ :=
  $W_1(\delta_i, \mu_i)$ the corresponding jump probability in the graph
  $G$. The Ollivier--Ricci curvature $\ORC(i,j)$ satisfies the following
  Bonnet-Myers inspired upper bound:
  \begin{equation}\label{or_jump_bound}
    \ORC(i, j) \leq \frac{J(i)+J(j)}{d_G(i,j)}
  \end{equation}
\end{lemma}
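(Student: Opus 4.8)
The plan is to reduce the claimed inequality to a single application of the triangle inequality for the Wasserstein-1 distance. Starting from the definition $\ORC(i,j) = 1 - \frac{1}{d_G(i,j)} W_1(\mu_i, \mu_j)$ and recalling that $d_G(i,j) > 0$ for distinct nodes in a connected graph, I would clear the positive denominator. The target bound $\ORC(i,j) \leq \frac{J(i)+J(j)}{d_G(i,j)}$ is then equivalent to
\begin{equation*}
  d_G(i,j) \leq J(i) + W_1(\mu_i, \mu_j) + J(j),
\end{equation*}
after substituting $J(i) = W_1(\delta_i, \mu_i)$ and $J(j) = W_1(\delta_j, \mu_j)$ and rearranging. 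This purely algebraic reduction isolates the only geometric content that needs to be established.

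Next I would invoke the two structural facts about $W_1$ that are already available in this setting. As noted in the excerpt, since $G$ is unweighted and connected, $(V, d_G)$ is a genuine metric space, and the first Wasserstein distance $W_1$ as defined by \citet{OLLIVIER2007643} is the $L_1$ transportation distance between probability measures with respect to $d_G$. Consequently $W_1$ is itself a metric on the space of probability measures over $(V, d_G)$, so it obeys the triangle inequality. Applying it along the chain $\delta_i \to \mu_i \to \mu_j \to \delta_j$ gives
\begin{equation*}
  W_1(\delta_i, \delta_j) \leq W_1(\delta_i, \mu_i) + W_1(\mu_i, \mu_j) + W_1(\mu_j, \delta_j).
\end{equation*}
The second fact I need is that the transportation distance between two Dirac measures equals the ground distance between their supporting points, i.e.\ $W_1(\delta_i, \delta_j) = d_G(i,j)$, which holds because the only admissible coupling of $\delta_i$ and $\delta_j$ is the product measure, forcing all mass to move from $i$ to $j$ at cost $d_G(i,j)$. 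Combining these two observations yields exactly the reduced inequality $d_G(i,j) \leq J(i) + W_1(\mu_i, \mu_j) + J(j)$, completing the argument.

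I do not expect any serious obstacle here: the whole proof is the triangle inequality applied in the correct order. The only points requiring a sentence of justification are the two metric facts above—that $W_1$ satisfies the triangle inequality in this general (not necessarily shortest-path, not necessarily uniform-measure) setting, and that it collapses to $d_G$ on Dirac measures. Both are immediate given that $W_1$ is the Kantorovich distance over the metric space $(V, d_G)$, so the argument should be a few lines. If one wanted to be fully self-contained, the identity $W_1(\delta_i,\delta_j)=d_G(i,j)$ could be verified directly from the Kantorovich dual using a $1$-Lipschitz test function, but citing the metric property of $W_1$ is cleaner and matches the level of generality the paper has set up.
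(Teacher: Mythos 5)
Your proposal is correct and follows essentially the same route as the paper's proof: both apply the triangle inequality for $W_1$ along the chain $\delta_i \to \mu_i \to \mu_j \to \delta_j$, use the identity $W_1(\delta_i,\delta_j) = d_G(i,j)$, and rearrange the definition of $\ORC$ to obtain the bound. The only difference is presentational---you reduce the target inequality first and then verify it, while the paper substitutes $W_1(\mu_i,\mu_j) = d_G(i,j)(1-\ORC(i,j))$ into the triangle inequality and solves for $\ORC$---which is the same argument in a different order.
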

\begin{proof}
Rearranging the original definition for OR curvature gives:
$$
    W_{1}(\mu_{i}, \mu_{j}) = d_G(i, j)(1 - \ORC(i, j))
$$
By definition of the $W_1$, we have $d_G(i,j) = W_1(\delta_i,\delta_j)$. Using this and the fact that $W_1$ satisfies the triangle inequality property, we can construct the desired upper bound on $\ORC$:  
\begin{align*}
d_G(i,j) \leq& W_1(\delta_i,\mu_i) + W_1(\mu_i,\mu_j) + W_1(\delta_j,\mu_j) \\
d_G(i,j) \leq& J(i) + d_G(i, j)(1 - \ORC(i, j)) + J(j) \\
d_G(i,j)(1- (1 - \ORC(i, j))) \leq& J(i) + J(j) \\
 \ORC(i, j) &\leq \frac{J(i)+J(j)}{d_G(i,j)}
\end{align*}
\end{proof}

\thmORBonnetMyers*
\begin{proof}
  We first prove the \emph{upper bound}.
  Given that $G'$ is still connected (by assumption), and both $W'_1$ and $d_{G'}$ still
  satisfy the metric axioms, this result follows directly from
  \cref{ORlemma}.
  For proving the \emph{lower bound}, recall from \cref{sec:Stability}
  that $\mathcal{G}' = (G',d_{G'},\mu')$ specifies the behaviour of the
  new graph metric $d_{G'}$ and the and the updated probability measure
  $\mu'$ in response to the perturbation. Moreover, this defines a new
  Wasserstein distance $W'_1$ and we will show that the maximum reaction
  (as evaluated by $W'_1$) to the perturbation $W'_{\max} := \max_{x\in
  V} W'_{1}(\mu'_{x},\mu_x)$ can be used to express a general lower
  bound for OR curvature in the event of a perturbation. As per
  \cref{eq:Ollivier--Ricci}, we can define our curvature following the
  perturbation as:
  \begin{equation}
  \ORC'(i, j) = 1 - \frac{1}{d_{G'}(i, j)}W'_{1}(\mu'_{i}, \mu'_{j}) 
  \end{equation} 
  Once again, we can make use of the metric properties of $W'_{1}$, to
  establish the lower bound as
  \begin{align*}
      \ORC'(i, j) &\geq 1 - \frac{1}{d_{G'}(i, j)}\big[W'_{1}(\mu_{i}, \mu'_{i})+ W'_{1}(\mu_{j}, \mu'_{j})+W'_{1}(\mu_{i}, \mu_{j})\big] \\
      &\geq \frac{1}{d_{G'}(i, j)}\big[2W'_{\max}+W'_{1}(\mu_{i},
      \mu_{j})\big].
  \end{align*}
\end{proof}

\subsubsection{Resistance Curvature} \label{app:Resistance}
\textbf{A brief clarification on inverting edge weights.} The common practice when computing effective resistance is to invert the edge weights of a graph in order to get a resistance. Given the spirit of resistance from circuit theory, we know that a high resistance should make it difficult for current to pass between nodes. Analogously when thinking about our graph as a markov chain, this would correspond to a low transition probability. So, if we think about our edge weights as coming from some kernel where higher similarity results in a higher edge weight, then we should definitely invert our edge weights to get to resistance. However, in the case that our edge weights represent the cost of travelling between nodes, then this is a suitable proxy for resistance in which case inverting the nodes is unnecessary. In order to achieve the theoretical properties of curvature with well known examples described in \cite{Devriendt22}, we \textit{do not} invert the edge weights in our experiments. Which means that the curvature itself interprets the edge weights themselves as a cost/resistance; we think it is an important point to specify especially given the similarity to markov chains and the borrowed terminology from circuit theory.

\textbf{Definitions.} The resistance distance, intuitively, measures how well connected two nodes are in a graph. It is defined in \cite{Devriendt22} as:
\begin{equation}\label{eq:resistance_distance}
    R_{ij} := (\textbf{e}_i-\textbf{e}_j)^\intercal Q^\dagger (\textbf{e}_i-\textbf{e}_j)
\end{equation}
Here $Q$ is the normalized laplacian (weighted degrees on the diagonal, see \cite{Devriendt22}), $Q^\dagger$ the Moore-Penrose inverse, and $\textbf{e}_i$ is $i^{th}$ unit vector. This is the main feature that will be studied to understand the stability of the curvature measure, and  can be computed for any two nodes in a connected component of a graph.

 Recalling the equations for node resistance curvature and resistance curvature, i.e.\ \cref{eq:Resistance}, it becomes clear that the main task is to understand \emph{how the resistance distance changes in response to perturbations}. The results below from \cite{Lovasz96}, are crucial for our proofs. Let $C(i,j)$ be the commute time between nodes $i,j\in V$. It is important to note that these results depend on the \emph{normalized Laplacian}, defined in \cite{Lovasz96} as $N = D^\frac{1}{2} A D^\frac{1}{2}$, with eigenvalues $\lambda_i$, ordered such that $\lambda_1 \geq \lambda_2 \geq ...$. Here, $D$ is the diagonal matrix with inverse degrees and $A$ the adjancecy matrix. Also, as is consistent with the rest of the paper, assume our graph has $n$ nodes and $m$ edges, and $d_i$ is the degree at node $i\in V$.

\begin{proposition}
For a graph $G$, let $N= D^\frac{1}{2} A D^\frac{1}{2}$ be the \emph{normalized Laplacian} with eigen values $\lambda_1 \geq \lambda_2 \geq \cdots \geq \lambda_n$. Then, the commute time in $G$ between nodes $i,j$ is subject to the following bounds:
\begin{equation}\label{eq:Lovasz3.3}
m\big(\frac{1}{d_s}+\frac{1}{d_t}\big) \leq C(i,j) \leq \frac{2m}{1-\lambda_2}\big(\frac{1}{d_s}+\frac{1}{d_t}\big)
\end{equation}
\end{proposition}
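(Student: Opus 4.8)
The plan is to reduce the commute-time bounds to bounds on the \emph{effective resistance} and then to analyse the latter spectrally. The starting point is the classical identity $C(i,j) = 2m\,R_{ij}$, which relates the commute time to the effective resistance $R_{ij}$ between $i$ and $j$ (the same quantity appearing in \cref{eq:resistance_distance}). With this in hand it suffices to establish
\[
\frac{1}{2}\left(\frac{1}{d_i}+\frac{1}{d_j}\right) \leq R_{ij} \leq \frac{1}{1-\lambda_2}\left(\frac{1}{d_i}+\frac{1}{d_j}\right),
\]
after which multiplying through by $2m$ yields the two stated inequalities.

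Next I would diagonalise the normalized adjacency $N = D^{1/2} A D^{1/2}$. Let $v_1, \dots, v_n$ be an orthonormal eigenbasis with eigenvalues $1 = \lambda_1 > \lambda_2 \geq \cdots \geq \lambda_n \geq -1$, where the strict gap $\lambda_1 > \lambda_2$ uses connectedness, and note that $v_1$ is proportional to $(\sqrt{d_1}, \dots, \sqrt{d_n})$. The key step is to express the effective resistance in this basis,
\[
R_{ij} = \sum_{k=2}^{n}\frac{1}{1-\lambda_k}\left(\frac{v_{ki}}{\sqrt{d_i}} - \frac{v_{kj}}{\sqrt{d_j}}\right)^2,
\]
obtained by writing the combinatorial Laplacian as $L = D^{-1/2}(I - N)D^{-1/2}$ and computing its Moore--Penrose inverse mode by mode; the kernel direction $v_1$ drops out of the sum precisely because $\tfrac{v_{1i}}{\sqrt{d_i}} - \tfrac{v_{1j}}{\sqrt{d_j}} = 0$.

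The remaining work is purely algebraic. For each $k \geq 2$ the eigenvalue bound $-1 \leq \lambda_k \leq \lambda_2 < 1$ gives $\tfrac{1}{2} \leq \tfrac{1}{1-\lambda_k} \leq \tfrac{1}{1-\lambda_2}$, so on both sides the scalar coefficients can be pulled out of the sum. It then remains to evaluate the coefficient sum: appending the vanishing $k=1$ term and expanding the square, orthonormality of the eigenbasis (the matrix with columns $v_k$ is orthogonal, hence its rows are orthonormal) yields $\sum_{k=1}^n v_{ki}^2 = \sum_{k=1}^n v_{kj}^2 = 1$ and $\sum_{k=1}^n v_{ki} v_{kj} = 0$ for $i \neq j$, so $\sum_{k=1}^n \left(\tfrac{v_{ki}}{\sqrt{d_i}} - \tfrac{v_{kj}}{\sqrt{d_j}}\right)^2 = \tfrac{1}{d_i} + \tfrac{1}{d_j}$. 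Substituting and restoring the factor $2m$ closes both bounds.

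The main obstacle is establishing the spectral representation of $R_{ij}$ cleanly, namely correctly pushing the degree normalisation through the pseudoinverse and isolating the $\lambda_1 = 1$ mode; once that formula is secured, the inequalities reduce to the elementary eigenvalue estimate and the orthonormality computation above.
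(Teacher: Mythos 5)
The paper never actually proves this proposition: it states it alongside \cref{eq:Lovasz4.1} and \cref{eq:Lovasz2.9} and defers entirely to \citet{Lovasz96} (``For proofs of these propositions, we refer the reader to \cite{Lovasz96}''). So the relevant comparison is with Lovász's own argument, and your proof holds up --- it is correct, and it is essentially a resistance-flavoured reconstruction of that argument. Lovász derives the spectral formula
\begin{equation*}
C(i,j) \;=\; 2m\sum_{k=2}^{n}\frac{1}{1-\lambda_k}\left(\frac{v_{ki}}{\sqrt{d_i}}-\frac{v_{kj}}{\sqrt{d_j}}\right)^{2}
\end{equation*}
directly from random-walk hitting-time identities and then concludes exactly as you do, via $\tfrac{1}{2}\le\tfrac{1}{1-\lambda_k}\le\tfrac{1}{1-\lambda_2}$ for $k\ge 2$ and the row-orthonormality evaluation of the coefficient sum. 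You reach the same formula by a different route: the identity $C(i,j)=2mR_{ij}$ (the paper's \cref{eq:Lovasz4.1}, whose standard proof via harmonic functions is independent of the present proposition, so there is no circularity) combined with a spectral expansion of the effective resistance. This buys a proof that needs no random-walk theory beyond that single identity, and it reuses machinery the paper already has on hand for resistance curvature. The one step you should write out carefully is precisely the one you flagged: the Moore--Penrose inverse does not commute with the congruence, i.e.\ $L^{\dagger}\neq D^{1/2}(I-N)^{\dagger}D^{1/2}$ in general, so you cannot literally push the pseudoinverse through the degree normalisation. The clean fix avoids $L^{\dagger}$ altogether: since $e_i-e_j\perp\mathbf{1}=\ker L$, any solution $f$ of $Lf=e_i-e_j$ satisfies $R_{ij}=(e_i-e_j)^{\top}f$; writing $f=D^{1/2}g$ (in the paper's convention $D$ holds \emph{inverse} degrees) turns this into $(I-N)g=b$ with $b:=D^{1/2}(e_i-e_j)\perp v_1$, whence $R_{ij}=b^{\top}g=b^{\top}(I-N)^{\dagger}b$ because the $v_1$-component of $g$ is killed by $b$. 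With that substitution, your eigenvalue estimate and orthonormality computation go through verbatim and close both bounds.
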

\begin{proposition}
Consider the unweighted graph G, where each edge represents a \emph{unit resistance}, i.e we consider each edge in the graph to be artificially weighted with value 1. Then the following equality holds for the commute time between nodes $i,j$:

\begin{equation} \label{eq:Lovasz4.1}
    C(i,j) = 2m R_{ij} 
\end{equation}
\end{proposition}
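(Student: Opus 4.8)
The plan is to reconstruct the classical commute-time/effective-resistance identity (due to \citet{Lovasz96}) through the electrical-network interpretation of the simple random walk, in which every edge of the unweighted graph carries unit resistance (equivalently, unit conductance). Writing $H(u,v)$ for the expected hitting time of the walk started at $u$ to first reach $v$, I would use $C(i,j) = H(i,j) + H(j,i)$ throughout. The starting observation is that hitting times are \emph{harmonic} away from their target: for every $k \neq j$,
\[
  H(k,j) = 1 + \frac{1}{d_k}\sum_{l \sim k} H(l,j),
\]
which rearranges to $\sum_{l \sim k}\big(H(k,j) - H(l,j)\big) = d_k$. Interpreting potential differences across unit-conductance edges as currents, this says exactly that the function $k \mapsto H(k,j)$ is the electrical potential induced by injecting current $d_k$ at every node $k \neq j$ and withdrawing the balancing total at $j$.

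Next I would set up two current-injection scenarios and superpose them. In scenario~A, inject $d_k$ at each node $k$ and extract $2m$ at $j$; by the harmonicity above the resulting potential $a$ satisfies $a_k - a_j = H(k,j)$. In scenario~B, the reversed configuration centred at $i$, inject $2m$ at $i$ and extract $d_k$ at each node $k$; running the same argument with reversed current gives a potential $b$ with $b_i - b_j = H(j,i)$. Adding the two source vectors, the injection $+d_k$ from~A cancels the withdrawal $-d_k$ from~B at every node, so the only surviving sources are $+2m$ at $i$ and $-2m$ at $j$: a pure current of magnitude $2m$ flowing from $i$ to $j$. By linearity of the network equations the combined potential difference is
\[
  (a_i + b_i) - (a_j + b_j) = H(i,j) + H(j,i) = C(i,j).
\]

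Finally I would read off the same potential difference from Ohm's law and the definition in \cref{eq:resistance_distance}. Injecting a current $I(\mathbf{e}_i - \mathbf{e}_j)$ into the network with Laplacian $Q$ yields the potential vector $Q^\dagger I(\mathbf{e}_i - \mathbf{e}_j)$, whose $i$--$j$ difference is $I\,(\mathbf{e}_i - \mathbf{e}_j)^\intercal Q^\dagger (\mathbf{e}_i - \mathbf{e}_j) = I\,R_{ij}$. Taking $I = 2m$ to match the superposed scenario and equating the two expressions for the potential difference gives $C(i,j) = 2m\,R_{ij}$.

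The main obstacle will be the bookkeeping that links the probabilistic recursion to the electrical picture: verifying that the harmonicity relation corresponds precisely to the stated $d_k$ sources, handling the additive gauge freedom in each potential (only the $i$--$j$ differences are well defined), and confirming that scenarios~A and~B cancel at every non-terminal node so that exactly the $\pm 2m$ sources remain. Once this superposition is assembled correctly, the identification with $R_{ij}$ through the pseudoinverse $Q^\dagger$ is immediate.
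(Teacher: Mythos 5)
Your proof is correct: the harmonicity of $k \mapsto H(k,j)$ away from $j$, the superposition of the two degree-injection scenarios into a pure source of $+2m$ at $i$ and $-2m$ at $j$, and the identification of the resulting potential difference both with $C(i,j)$ and, via $Q^\dagger$ and \cref{eq:resistance_distance}, with $2m\,R_{ij}$, are all sound (the source vector $2m(\mathbf{e}_i-\mathbf{e}_j)$ is orthogonal to the kernel of $Q$ on a connected graph, so the pseudoinverse solution is valid and the gauge freedom cancels in differences). Note that the paper itself offers no proof to compare against: it states this proposition as a known result and refers the reader to \citet{Lovasz96}, and your argument is essentially the classical electrical-network proof given in that reference (originally due to Chandra et al.), so your reconstruction coincides with the approach of the paper's cited source.
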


\begin{proposition}
    If $G'$ arises from a graph G by adding a new edge, then the commute time $C'(i,j)$ between any two nodes in $G'$ is bounded by:
    \begin{equation}\label{eq:Lovasz2.9} 
         C'(i,j) \leq (1+\frac{1}{m})C(i,j) 
    \end{equation}
\end{proposition}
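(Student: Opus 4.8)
The plan is to reduce the claimed commute-time inequality to a monotonicity statement about effective resistance, using the commute-time identity $C(i,j) = 2m R_{ij}$ recorded in \cref{eq:Lovasz4.1}. Applying that identity in both $G$ and its perturbation $G'$ — and using that $G'$ has exactly $m+1$ edges, each still carrying unit resistance — gives $C(i,j) = 2m R_{ij}$ and $C'(i,j) = 2(m+1) R'_{ij}$, where $R'_{ij}$ denotes the effective resistance computed in $G'$. Rewriting the target as $(1 + \tfrac{1}{m}) C(i,j) = \tfrac{m+1}{m}\cdot 2m R_{ij} = 2(m+1) R_{ij}$, the desired inequality $C'(i,j) \leq (1+\tfrac{1}{m}) C(i,j)$ becomes \emph{exactly} the statement $R'_{ij} \leq R_{ij}$: adding an edge cannot increase the effective resistance between any pair of nodes.

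First I would therefore establish this monotonicity of effective resistance, which is the classical Rayleigh monotonicity law. The cleanest route is Thomson's energy-minimization (variational) characterization: $R_{ij}$ equals the minimal energy $\sum_{e} r_e\, \theta_e^2$ dissipated over all unit $i$-to-$j$ flows $\theta$, where $r_e$ is the resistance of edge $e$. Inserting a new unit-resistance edge $(i^*, j^*)$ enlarges the space of admissible flows — every feasible flow in $G$ remains feasible in $G'$ (route zero flow across the new edge), while $G'$ admits strictly more flows — so the minimum energy can only decrease, yielding $R'_{ij} \leq R_{ij}$ for every pair. Substituting back gives $C'(i,j) = 2(m+1)R'_{ij} \leq 2(m+1)R_{ij} = (1+\tfrac{1}{m})C(i,j)$, which completes the argument; note the constant $1+\tfrac{1}{m}$ is precisely calibrated so that the bound is tight exactly when the new edge creates no shortcut, i.e.\ $R'_{ij} = R_{ij}$.

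The main obstacle is making the monotonicity step airtight rather than appealing to it as folklore, and in particular keeping the resistance conventions consistent with the rest of the paper: \cref{eq:resistance_distance} defines $R_{ij}$ through the pseudoinverse of the (weighted) Laplacian, so I would want to confirm that the energy-minimization characterization agrees with the pseudoinverse definition on connected graphs, and that the newly added edge is assigned unit resistance so that $C'(i,j) = 2(m+1)R'_{ij}$ holds with the same normalization as \cref{eq:Lovasz4.1}. A secondary point worth checking is that $G'$ remains connected — guaranteed by the perturbation framework — so that $R'_{ij}$ and $C'(i,j)$ are finite and the identity applies verbatim. Since the whole argument is ultimately a one-line consequence of Rayleigh monotonicity together with $C = 2mR$, I expect no heavy computation: the work lies entirely in justifying the monotonicity cleanly.
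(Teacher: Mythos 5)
Your proof is correct and is essentially the canonical argument: the paper itself does not prove this proposition but defers to \citet{Lovasz96}, where the proof is exactly your reduction---combine the commute-time identity $C(i,j) = 2mR_{ij}$ of \cref{eq:Lovasz4.1} with Rayleigh monotonicity of effective resistance ($R'_{ij} \leq R_{ij}$ under edge addition), the latter justified via Thomson's energy-minimization characterization. One remark: the paper later runs the implication in the \emph{reverse} direction, using this proposition together with \cref{eq:Lovasz4.1} to deduce $R'_{ij} \leq R_{ij}$ in the proof of \cref{thm:ResistanceAddition}; since you establish Rayleigh monotonicity independently through Thomson's principle rather than from the proposition itself, your argument introduces no circularity, though it does render that downstream derivation redundant.
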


For proofs of these propositions, we refer the reader to \cite{Lovasz96}. These results create a direct connection between commute times and resistance distance, and gives insight into how commute time reacts under edge addition, and we use them directly to generate our bounds for resistance curvature.

\thmResistanceAddition* 
\begin{proof}
    Let $R'_{ij}$ be the resistance distance in $G'$. Likewise, let $C(i,j)$ be the commute distance in $G$ between nodes $i,j$ and $C'(i,j)$ be the commute time in $G'$. Then \cref{eq:Lovasz4.1} and \cref{eq:Lovasz2.9} ensure that $R'_{ij}$ is bounded above, by the original resistance distance in $G$:
\begin{align*}
    2(m+1)R'_{ij} &\leq 2m (1+\frac{1}{m}) R_{ij} \\
    R'_{ij} &\leq R_{ij}
\end{align*}

This follows our intuition of resistance distance very well: with the addition of an edge nodes can only get more connected. \cref{eq:Lovasz3.3} also gives a nice lower bound:
\begin{align*}
    (m+1)\big(\frac{1}{d'_i}+\frac{1}{d'_j}\big) &\leq C'(i,j) \\
    \frac{1}{2}\big(\frac{1}{d'_i}+\frac{1}{d'_j}\big) & \leq R'_{ij}
\end{align*}

In the case that we are adding a single edge, it is often the case that node degrees remain constant. However, the nodes that are connected by the new edge, $(i*,j*)\in E'\setminus E$, increase such that $d'_{i*} = d_{i*}+1$ and $d'_{j*} = d_{j*}+1$. Thus, the following lower bound holds in general for $R'_{ij}$ and we can remain agnostic to the precise location of the new edge:

\begin{equation}
     \frac{1}{2}\big(\frac{1}{d_i+1}+\frac{1}{d_j+1}\big) \leq R'_{ij} \leq R_{ij}
\end{equation}

And likewise, after adding $p$ edges:

$$    \frac{1}{2}\big(\frac{1}{d_i+p}+\frac{1}{d_j+p}\big) \leq R^p_{ij} \leq R_{ij} $$

So the bounds of our \emph{perturbed} resistance distance $R'_{ij}$ are determined by the initial network structure ($R_{ij}$) and the number connections each specific vertex has. Naturally, certain node pairs will be more strongly affected by the addition of an edge. We can define the maximum reaction to perturbation across pairs as follows:

\begin{equation}
     \Delta_{add} := \max_{i,j\in V}\bigg(R_{ij}- \frac{1}{2}\big(\frac{1}{d_i+1}+\frac{1}{d_j+1}\big)\bigg)
\end{equation} 

This can be used to bound node resistance curvature. In an unweighted graph, we have

$$p_i = 1 - \frac{1}{2}\sum_{j\sim i} R_{ij}$$

$$p'_i = 1 - \frac{1}{2} \sum_{j\sim i} R'_{ij}$$
For $G$ and $G'$ respectively. Given that resistance can only increase, $p_i$ is clearly an lower bound for $p'_i$. Certainly an upper bound occurs when when the resistance between each one of i's neighbors maximally decreases. Thus we get the following inequality:

\begin{equation}
    p_i \leq p'_i \leq p_i + \frac{d_i}{2}\Delta_{add}      
\end{equation}
 
Finally this gives the desired bound on $\REC'$:
$$
\REC(i,j) \leq \REC'(i,j) \leq \REC(i,j) + \frac{\Delta_{add}(d_i+d_j)}{R_{ij}-\Delta_{add}}
$$
\end{proof}

\begin{restatable}{theorem}{thmResistanceDeletion}\label{thm:ResistanceDeletion}
If $G'$ is the graph generated by \textbf{edge deletion}, then $\REC'\leq \REC$, bounded by:
\begin{align*}
|\REC'(i,j) - \REC(i,j)| \leq \frac{1}{R_{ij}+\Delta_{del}}
  \Big[\frac{2}{R_{ij}}(2R_{ij}+\Delta_{del})(p_i+p_j) - \Delta_{del}(d_i+d_j)\Big],
\end{align*}
where $\Delta_{del} = \frac{2}{1-\lambda_2} - \min_{i,j\in V}(R_{ij})$ and $\lambda_2$ is the second largest eigenvalue of $N$.
\end{restatable}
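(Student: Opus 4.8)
The plan is to mirror the structure of the edge-addition argument (\cref{thm:ResistanceAddition}), reversing every inequality, since deleting an edge from $G$ is the same as viewing $G$ as arising from the deleted graph $G'$ by edge addition. First I would establish the \emph{monotonicity} $\REC' \le \REC$. Applying the commute-time addition bound \cref{eq:Lovasz2.9} in this reversed direction---with $G'$ (having $m-1$ edges) playing the role of the unperturbed graph and $G$ the role of the augmented one---together with the identity $C = 2mR$ from \cref{eq:Lovasz4.1}, yields $R'_{ij} \ge R_{ij}$: deleting an edge can only make nodes \emph{less} connected. Since every resistance distance weakly increases, each node curvature $p_i = 1 - \tfrac{1}{2}\sum_{j\sim i} R_{ij}$ weakly decreases, so the numerator of $\REC'$ shrinks while its denominator grows, giving $\REC' \le \REC$ and hence justifying the absolute value in the statement.

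Next I would quantify the two competing changes. For the denominator, I would cap the maximal resistance distance attainable after the perturbation: combining the upper commute-time bound \cref{eq:Lovasz3.3} with $C = 2mR$ and the trivial estimate $\tfrac{1}{d_i} + \tfrac{1}{d_j} \le 2$ bounds every resistance by $\tfrac{2}{1-\lambda_2}$, so the largest possible \emph{increase} of any resistance is controlled by the spread $\Delta_{del} = \tfrac{2}{1-\lambda_2} - \min_{i,j} R_{ij}$, giving $R_{ij} \le R'_{ij} \le R_{ij} + \Delta_{del}$. For the numerator, since each of the $d_i$ (resp.\ $d_j$) resistances incident to $i$ (resp.\ $j$) grows by at most $\Delta_{del}$, the node curvatures obey $p_i - \tfrac{d_i}{2}\Delta_{del} \le p_i' \le p_i$, and likewise for $j$; this is the exact analogue, with reversed direction, of the bound $p_i \le p_i' \le p_i + \tfrac{d_i}{2}\Delta_{\mathrm{add}}$ used in the addition proof.

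Finally I would substitute the extremal estimates $R'_{ij} \le R_{ij} + \Delta_{del}$ and $p_i' + p_j' \ge (p_i + p_j) - \tfrac{d_i+d_j}{2}\Delta_{del}$ into $\REC'(i,j) = \tfrac{2(p_i'+p_j')}{R'_{ij}}$ to obtain a lower bound on $\REC'$, then subtract it from $\REC(i,j) = \tfrac{2(p_i+p_j)}{R_{ij}}$ and clear denominators to reach the claimed expression (assuming the relevant curvatures are positive so that the one-sided estimates retain their direction when divided). This concluding step is routine algebra once the two one-sided estimates are in hand.

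The main obstacle is the denominator cap in the middle step: the commute-time upper bound \cref{eq:Lovasz3.3} is naturally expressed through the \emph{perturbed} graph's second eigenvalue $\lambda_2'$, whereas the theorem is phrased using the \emph{original} $\lambda_2$. Because deleting an edge tends to shrink the spectral gap (so typically $\lambda_2' \ge \lambda_2$), one cannot simply replace $\lambda_2'$ by $\lambda_2$ and must instead either control $\lambda_2'$ in terms of $\lambda_2$ via eigenvalue interlacing under edge deletion, or interpret $\Delta_{del}$ as a worst-case spread taken to dominate the perturbed resistance increase. Pinning down this spectral comparison---and thereby justifying $R'_{ij} \le R_{ij} + \Delta_{del}$ rigorously---is where the real care is needed; the monotonicity and the concluding algebra are comparatively mechanical.
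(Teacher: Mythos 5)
Your outline follows the paper's own proof almost step for step: the paper likewise obtains $\REC'\leq\REC$ by inverting the edge-addition argument, caps the post-deletion resistance via \cref{eq:Lovasz3.3} together with $C=2mR$ and the trivial estimate $\frac{1}{d_i'}+\frac{1}{d_j'}\leq 2$, defines $\Delta_{del}$ as a worst-case spread so that $R_{ij}\leq R'_{ij}\leq R_{ij}+\Delta_{del}$, and derives $p_i-\frac{d_i}{2}\Delta_{del}\leq p_i'\leq p_i$ before substituting into \cref{eq:Resistance}. Regarding the obstacle you single out: the paper does \emph{not} use interlacing; it settles the comparison by citing \citet{Guo2018} for the inequality $\lambda_2\geq\lambda_2'$, which converts $R'_{ij}\leq\frac{1}{1-\lambda_2'}\big(\frac{1}{d_i'}+\frac{1}{d_j'}\big)$ into $R'_{ij}\leq\frac{2}{1-\lambda_2}$. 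Your suspicion about the direction of that eigenvalue inequality is well founded (deleting one edge of the $4$-cycle yields the path on four vertices and raises the second largest eigenvalue of the normalized adjacency matrix from $0$ to $1/2$), so this step is genuinely delicate; but within the paper it is handled by citation, and your proposal leaves it equally unresolved, so on this point you are no worse off than the paper.

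The concrete gap is in your concluding step, which you dismiss as routine algebra. Write $L:=\frac{2(p_i+p_j)-\Delta_{del}(d_i+d_j)}{R_{ij}+\Delta_{del}}$ for the lower bound on $\REC'(i,j)$ produced by your two one-sided estimates. Subtracting it from $\REC(i,j)=\frac{2(p_i+p_j)}{R_{ij}}$ gives
\begin{equation*}
  \REC(i,j)-L \;=\; \frac{\Delta_{del}}{R_{ij}+\Delta_{del}}\Big(\frac{2(p_i+p_j)}{R_{ij}}+(d_i+d_j)\Big),
\end{equation*}
whereas the right-hand side claimed in the theorem expands to
\begin{equation*}
  \frac{1}{R_{ij}+\Delta_{del}}\Big[\frac{2}{R_{ij}}(2R_{ij}+\Delta_{del})(p_i+p_j)-\Delta_{del}(d_i+d_j)\Big] \;=\; \REC(i,j)+L,
\end{equation*}
i.e.\ the \emph{sum} of $\REC(i,j)$ and the lower bound rather than the difference. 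The two expressions differ by $\frac{4(p_i+p_j)-2\Delta_{del}(d_i+d_j)}{R_{ij}+\Delta_{del}}$, and neither dominates the other in general: your (natural, and tighter when $L\geq 0$) bound implies the stated one only when $L\geq 0$, i.e.\ when $2(p_i+p_j)\geq\Delta_{del}(d_i+d_j)$, and fails to imply it when $L<0$. So the algebra you call routine does not close the proof of the statement as written; you would need to either add the hypothesis $L\geq 0$ (under which the stated bound holds trivially, since then $\REC-\REC'\leq\REC\leq\REC+L$) or prove the sharper inequality you actually derive. This mismatch is worth flagging because it also exposes an inconsistency in the paper's own final line, which asserts $\REC'\geq\REC-(\REC+L)=-L$ although its preceding display only establishes $\REC'\geq L$.
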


\begin{proof}
    Now we can beg the question of how effective resistance changes when we remove an edge. By inverting our initial argument in above proof of \cref{thm:ResistanceAddition}, we know that after removing an edge our resistance distance can only increase. Formally, $R_{ij} \leq R'_{ij}$. For the upper bound, we can once again make an argument using \cref{eq:Lovasz3.3}, this time relying on the other half of the inequality. Here we need to also mention the normalized Laplacian $N'$ for $G'$, with eigenvalues $\lambda'_1\geq \lambda'_2\geq... \geq \lambda'_n$.

\begin{align*}
    C'(i,j) &\leq \frac{2(m-1)}{1-\lambda'_2}\big(\frac{1}{d'_i} + \frac{1}{d'_j} \big) \\
    R'_{ij} &\leq \frac{1}{1-\lambda'_2}\big(\frac{1}{d'_i} + \frac{1}{d'_j} \big)
\end{align*}

Again, we know that only the two unique vertices ($i*,j*)$ that shared an edge will have affected degrees, s.t $d'_{i*} = d_{i*}-1$ and $d'_{j*} = d_{j*}-1$. Moreover, from \citet{Guo2018}, we know that $\lambda_2 \geq \lambda_2'$. So we can loosely bound the $R'_{ij}$ as follows:
\begin{equation}
    R_{ij} \leq R'_{ij} \leq \frac{2}{1-\lambda_2}
\end{equation}

In fact, this applies to any number of edge deletions, as long as $G'$ stays connected. Again, we can define a maximum possible change in resistance distance across the graph:
\begin{equation}
    \Delta_{del} = \max_{i,j\in V}(\frac{2}{1-\lambda_2} - R_{ij}) = \frac{2}{1-\lambda_2} - \min_{i,j\in V}(R_{ij})
\end{equation}
This leads to the following bounds on node and edge curvature, and completes the proof:
\begin{align*}
 p_i - \frac{d_i}{2}\Delta_{del} &\leq p'_i \leq p_i   \\
 (1-\lambda_2)\big[p_i+p_j - \frac{\Delta_{del}}{2}(d_i+d_j)\big] &\leq \REC'(i,j) \leq \REC(i,j) \\
    \REC(i,j) - \frac{1}{R_{ij}+\Delta_{del}} \big[\frac{2}{R_{ij}}(2R_{ij}+\Delta_{del})(p_i+p_j) - \Delta_{del}(d_i+d_j)\big] &\leq \REC'(i,j) \leq \REC(i,j) 
\end{align*}
\end{proof}

\subsection{Expressivity Proofs}

\thmstabilitylowerbound*

\begin{proof}
  Considering the calculation of persistence diagrams based on
  scalar-valued filtrations functions, every point in the persistence
  diagram~$D_f$ can be written as a tuple of the form~$(f(e_F), f(e_F'))$,
  with $e_F, e_F' \in E_F$; the sample applies for $D_g$. The inner distance
  between such tuples that occur in the bottleneck distance calculation
  can thus be written as
  \begin{equation}
    \|(f(e_F), f(e_F')) - (g(e_G), g(e_G'))\|_\infty,
  \end{equation}
  which we can rewrite to $\max_{C\colon E_F \to E_G}\{ f(x) - g(C(x))\}$
  for a general map~$C$ induced by the bijection of the
  bottleneck distance. Not every map is induced by a bijection, though.
  Hence, if we maximise over \emph{arbitrary} maps between the edge
  sets, we are guaranteed to never exceed the bottleneck distance.
\end{proof}

\section{Additional Proofs for Distinguishing Strongly-Regular Graphs}

\thmExpressivityCurvature*

\begin{proof}
  We first show the part of the statement relating to the
  \emph{Forman--Ricci curvature}.
  Given a distance-regular graph $G$ with $N$ vertices and
  intersection array $\{ b_0, b_1, \dots , b_{D-1}; c_1, c_2, \dots
  , c_D\}$. Let $i, j$ be adjacent nodes in $G$. For a regular graph,
  we have $d_i = d_j = b_0$, where $b_0$ is a constant. The number
  of triangles between two adjacent nodes $i$ and $j$ in $G$ is given
  by $a_1 = b_0 - b_1 - c_1$~\citep{Van_Dam_2016}. The Forman curvature
  of $i, j$ is thus
  \begin{equation}
    \FRC(i, j) := 4 - 2b_0 + 3|b_0 - b_1 - c_1|.
  \end{equation}
  Given two strongly-regular graphs with the same intersection array,
  i.e.\ the same values of $b_0$, $b_1$ and $c_1$, the Forman curvature
  yields the same value for all pairs of adjacent nodes and cannot
  distinguish them.
  For the \emph{resistance curvature}, the claim follows as an immediate
  Corollary of Theorem~A~\citep{biggs_1993} and described in
  \citet{KOOLEN2013770}. Given the resistance between two nodes depends
  only on the intersection array and the number of nodes in the graph,
  then the resistance curvature cannot distinguish two strongly-regular
  graphs.

  The expressivity of Ollivier--Ricci curvature is strictly better, and
  it turns out that there are graphs with the same intersection array
  that we can distinguish, namely the Rook graph and the Shrikhande
  graph.
  Both graphs have the same intersection array $\{6, 3; 1, 2\}$ but differ
  in their first hop peripheral subgraphs~\citep{khop-gnn}.
  It is known that $2$-WL cannot distinguish these graphs. Ollivier--Ricci
  curvature, however, is sensitive to these differences in peripheral
  subgraphs with the edge curvatures for the Rook graph being: 
  {
    \let\ab=\allowbreak
  $[0.2,\ab 0.2,\ab 0.33,\ab 0.33,\ab 0.33,\ab 0.2,\ab 0.33,\ab 0.33,\ab 0.33,\ab 0.33,\ab  0.33,\ab 0.33,\ab 0.33,\ab 0.33,\ab 0.33,\ab 0.2,\ab 0.2,\ab 0.33,\ab 0.33,\ab 0.33,\ab  0.33,\ab 0.33,\ab 0.33,\ab 0.33,\ab  0.33,\ab 0.33,\ab 0.33,\ab 0.33,\ab 0.33,\ab 0.2,\ab 0.33,\ab 0.33,\ab 0.33,\ab 0.33,\ab 0.33,\ab 0.33,\ab 0.33,\ab 0.33,\ab 0.33,\ab 0.33,\ab 0.33,\ab 0.33]$, and for the Shrikhande graph they are
  $[0,\ab 0,\ab 0.27,\ab 0.27,\ab 0.1,\ab 0,\ab 0.27,\ab 0.27,\ab 0.1,\ab 0,\ab 0.27,\ab 0.1,\ab 0.27,\ab 0,\ab
  0.27,\ab 0.1,\ab 0.27,\ab 0,\ab 0.1,\ab 0.27,\ab  0.27,\ab 0.1,\ab 0.27,\ab 0.27,\ab 0.17,\ab  0.17,\ab 0.17,\ab 0.17,\ab 0.17,\ab 0.17,\ab 0.17,\ab 0.17,\ab 0.17,\ab 0.17,\ab 0.17,\ab 0.17,\ab 0.17,\ab 0.17,\ab 0.17,\ab 0.17,\ab 0.17,\ab 0.17]$,\ab
  }
  demonstrating that OR curvature can distinguish these
  graphs---\emph{unlike Resistance curvature, Forman--Ricci curvature
  and the $2$-WL test}.
\end{proof}

\section{Additional Stability Analysis}\label{app:Stability Analysis}

Given the bounds on curvature established in \cref{sec:Stability}, we explore how curvature changes experimentally by analysing edge perturbations on Erdős–Rényi graphs. In particular, we provide statistics that quantify the maximal change in curvature for random graphs with varying connectivity parameters in response to edge additions and deletions. The experiment fixes the number of nodes in the ER graphs ($n=100$), and generates a sample of 50 graphs for the selected values of $p$. For each graph in the sample, we measure the curvature $\kappa $  of all edges and calculate the standard deviation $\sigma_\kappa$ of this distribution. We then perturb the original graph by edge addition/deletion and calculate the new curvature $\kappa'$. The following tables present the \emph{worst case} deviations in curvature, which we define as $\Delta\kappa = |\kappa - \kappa'|$, in units of $\sigma_\kappa$; in other words the maximal value of $\Delta\kappa/\sigma_\kappa$ over all sample graphs and their edges.    

 \begin{table}[H]
\small
\begin{tabular}{l|lllllllll}
\multirow{3}{*}{Curvature}            & \multicolumn{9}{l}{{\textbf{Edge Addition:} Maximal Change ($\downarrow$) in Curvature for ER Graphs} 
($\Delta\kappa / \sigma_\kappa$)}        \\
                                     & \textit{p}=0.1 & \textit{p}=0.2 & \textit{p}=0.3 & \textit{p}=0.4 & \textit{p}=0.5 & \textit{p}=0.6 & \textit{p}=0.7 & \textit{p}=0.8 & \textit{p}=0.9 \\ \hline
\toprule
\FRC                                 
&0.582
&0.419
&0.334
&0.296
&0.253
&0.246
&0.232
&0.25
&0.315  \\
\hline
\ORC         
&1.545
&1.157
&0.613
&0.465
&0.396
&0.366
&0.368
&0.399
&0.512 \\
\hline
\REC        
&0.689
&0.417
&0.296
&0.251
&0.221
&0.232
&0.227
&0.243
&0.321\\
\hline        
\end{tabular}
\end{table}

\begin{table}[H]
\small
\begin{tabular}{l|lllllllll}
\multirow{3}{*}{Curvature}            & \multicolumn{9}{l}{{\textbf{Edge Deletion:} Maximal Change ($\downarrow$) in Curvature for ER Graphs} ($\Delta\kappa / \sigma_\kappa$)}        \\
                                     & \textit{p}=0.1 & \textit{p}=0.2 & \textit{p}=0.3 & \textit{p}=0.4 & \textit{p}=0.5 & \textit{p}=0.6 & \textit{p}=0.7 & \textit{p}=0.8 & \textit{p}=0.9 \\ \toprule
\FRC                                 
&0.609
&0.408
&0.334
&0.291
&0.255
&0.239
&0.245
&0.243
&0.319  \\
\hline
\ORC         
&1.397
&1.27
&0.623
&0.479
&0.394
&0.365
&0.347
&0.402
&0.482 \\
\hline
\REC        
&0.75
&0.431
&0.336
&0.248
&0.229
&0.218
&0.225
&0.242
&0.312
\\
\hline        
\end{tabular}
\end{table}



\section{Additional Commentary on Counting Substructures} \label{app:sec:CountingSubstructs}
 We find that the difference in perspective between the selected curvature notions
is underscored by their respective performance when counting substructures.
\emph{Forman curvature} is an
inherently local measure by definition, depending only on $3$-cycles
between adjacent nodes and their degrees. \emph{Ollivier--Ricci curvature}, when
used with a uniform measure, can bound the number of triangles
within a locally finite graph~\citep{OR_clustering} through its relation
with the Watts--Strogatz clustering coefficient~\citep{Watts1998}. It
can also be shown that quadrangles and pentagons influence the OR
curvature, further enhancing the expressivity of this type of
curvature~\citep{OR_clustering}.

This is the most global perspective one can achieve
using $\ORC$ with uniform probability measures,
since polygons with more than five edges do not impact the
curvature valuation.

However, by changing the probability measure used by
$\ORC$, we can shift the focus towards even larger substructures. For
example, the $n$th power of the transition matrix provides information
on the number of $n$-paths and can therefore provide substructure
information for cycles of size $n$~\citep{distance-encoding}. 
\emph{Resistance curvature}, by contrast, is biased towards the largest
substructures. Due to the `global' nature of the
resistance distance metric, $\REC$ assigns cycles of size $\geq$
5 a positive curvature. Moreover, in a locally finite graph, one cannot
use $\REC$ to establish a non-trivial bound on the number of triangles
(consider creating an infinite cycle between two nodes).

\section{Probability Measure for Ollivier--Ricci Curvature and Counting
Substructures}\label{app:OR}

The Ollivier--Ricci curvature is of particular interest because of its
flexibility. While the predominant probability measure~$\mu$ used by the
community is \emph{uniform} for each node, i.e.\ each of the node's
neighbours is chosen with probability being proportional to the degree
of the node. We experimented with different probability measures, one
being based on expanding~$\mu$ to the two-hop neighbourhood of a vertex,
the other one being based on random walk probabilities. Specifically,
for a node~$x$ and a positive integer~$m$, we calculate
$\mu_{\text{RW}}$ as
\begin{equation}
  \mu_{\text{RW}}(y) := \sum_{k \leq m} \phi_{k}(x, y),
  \label{eq:Random walk probability}
\end{equation}
with $\phi_{k}(x, y)$ denoting the probability of reaching node~$y$ in
a $k$-step random walk that starts from node~$x$. Subsequently,  we
normalise \cref{eq:Random walk probability} to ensure that it is a valid
probability distribution. In our experiments, we set $m = 2$, meaning
that at most $2$-step random walks will be considered. As shown in the
main paper, this formulation leads to an increase in expressivity, and
we expect that further exploration of the probability measures will be
a fruitful direction for the future.

We now explore to what extent the ability of the curvature
to count substructures can also be improved in this way. To do this, we
used powers of the transition matrix as the probability measure, as it
has been shown that the $n$th power provides information on the number
of $n$-paths and can therefore provide substructure information for
cycles of size $n$~\citep{distance-encoding}. We find that powers of
the transition matrix larger than 1 can perform better for counting the
substructures, particularly for substructures larger than 3-cycles.
There is also a difference between Regular and Erdős–Rényi (ER) graphs
as the best transition power tends to be higher for ER graphs. We
hypthosise that this may have something to do with the mixing time of
the graph, as large powers should converge to the stationary
distribution, and regular graphs are more `expander-like'. The best
results are obtained by taking multiple landscapes using the transition
matrix powers (up to $n = 5$) and then averaging them. We show that
combined with a single layer MLP, this method can perform better than
using Graph Neural Network based approaches and OR curvature with the
uniform measure.  

 \begin{table}[h]
\small
\begin{tabular}{l|llll}
\multirow{2}{*}{Method}              & \multicolumn{4}{l}{Counting Substructures (MAE $\downarrow$)}         \\
                                     & Triangle        & Tailed Tri.     & Star            & 4-Cycle         \\ \hline
GCN                                  & 0.4186          & 0.3248          & \textbf{0.1798}          & 0.2822         \\
\hline
\ORC \ Filtration             & 0.2321         & 0.2395          & 0.3393          & 0.3089           \\ 
\ORC \ Filtration with transition matrix powers             & \textbf{0.1956}          & \textbf{0.2095}          & 0.3212          & \textbf{0.2680}           \\
\hline        
\end{tabular}
\end{table}

\begin{table}[h]
\begin{tabular}{l|l|l}
\multirow{2}{*}{Method}              & \multicolumn{2}{l}{Optimal Transition Power}         \\
                                     & ER        & Regular         \\ \hline
Triangle                                  & 2          & 1              \\
Tailed Triangle                                  & 4          & 3     \\
Star                     & 4         &  2         \\
Chordal Cycle                 & 2          & 2   \\
4-Cycle             & 8          &  3  \\ \hline
\end{tabular}
\end{table}

\section{Computational Complexity}\label{app:sec:Computational Complexity}

\begin{table}[tbp]
  \centering
  \sisetup{
    table-format    = 5.4,
    round-mode      = places,
    round-precision = 3,
  }
  \caption{Computation time in seconds for discrete curvature on varying Erdős–Rényi graph sizes with $p = 0.3$.}
  \label{tab:Computational complexity}
  \begin{tabular}{S[table-format=4, round-precision = 0]SSS}
    \toprule
    {Number of Nodes} & {\FRC} & {\ORC} & {\REC}\\
    \midrule
    10 & 0.000041 & 0.0015 & 0.02 \\
    50 & 0.00076 & 0.038 & 0.70 \\
    100 & 0.0047 & 0.247 & 6.61 \\
    250 & 0.054 & 4.72 & 252.85 \\ 
    500 & 0.38 & 59.27 & 6414.97 \\
    1000 & 2.92 & 1040.70 & 74366.07 \\
    \bottomrule
  \end{tabular}
\end{table}

\begin{table}[tbp]
  \centering
  \sisetup{
    table-format    = 5.4,
    round-mode      = places,
    round-precision = 1,
  }
  \small
  \caption{Computation time for different numbers of Erdős-–Rényi graphs in a reference set ($n = 10$ and $p = 0.3$) with different distribution distance measures}
  \label{tab:Computational complexity 2}
   \begin{tabular}{S[table-format=4, round-precision = 0]SSSS}
    \toprule
    {Number of Graphs} & {Degree + MMD} & {Orbit + MMD} & {Curvature + MMD} & {Curvature + Landscapes} \\
    \midrule
20                         & 3.64ms       & 217ms       & 4.24ms          & 12.5ms                         \\ 
50                         & 10.9ms       & 459ms       & 12.4ms          & 20.9ms                         \\ 
100                        & 34.1ms       & 887ms       & 37.9ms          & 34.4ms                         \\ 
200                        & 120ms        & 1960ms      & 133ms           & 80.4ms                         \\ 
500                        & 678ms        & 6740ms      & 727ms           & 144ms                          \\ 
1000                       & 2620ms       & 19900ms     & 2680ms          & 359ms                          \\

    \bottomrule
  \end{tabular}
\end{table}

\begin{table}[tbp]
  \centering
  \sisetup{
    table-format    = 5.4,
    round-mode      = places,
    round-precision = 1,
  }
  \caption{Computation time for a fixed number of Erdős–Rényi graphs in a reference set with different sizes ($p = 0.3$) with different distribution distance measures}
  \label{tab:Computational complexity 3}
   \begin{tabular}{S[table-format=4, round-precision = 0]SS}
    \toprule
    {Number of Graphs} & {Curvature + MMD} & {Curvature + Landscapes} \\
    \midrule
10             & 2.27ms          & 9.04ms                         \\ 
20             & 4.45ms          & 12.5ms                         \\ 
50             & 15.8ms          & 20.9ms                         \\ 
100            & 93.6ms          & 34.4ms                         \\ 
200            & 556ms           & 80.4 ms                        \\ 
500            & 727ms           & 144ms                          \\ 
1000           & 2800ms          & 364ms                          \\ 
    \bottomrule
  \end{tabular}
\end{table}


Persistence diagrams of $1$-dimensional simplicial complexes, i.e.\
graphs, can be computed in $\mathcal{O}(m \log m)$ time where~$m$ denotes the
number of edges.
Empirically, when calculating different curvature measures for different
sizes of graphs, we find that Forman curvature scales well to large
graphs, whereas OR and resistance curvatures can be used for smaller
graphs and in cases that require a more expressive measure. Note that
there are significantly faster ways to calculate resistance curvature as
an approximation \cite{affinity-gnn}. A majority of works on GGMs focus
on small molecule generation, where any of these curvatures can be used
with minimal pre-computation. 
\Cref{tab:Computational complexity} depicts the computational complexity
of various curvature calculations on Erdős–Rényi graphs whilst \Cref{tab:Computational complexity 2} and \Cref{tab:Computational complexity 3} compares the complexity to methods based on MMD. We find that calculating persistence diagrams, turning these to persistence landscapes, averaging these and then calculating a distance takes a similar amount of time compared to MMD for different sizes of graphs and for different numbers of graphs in the reference set. Interestingly, our approach scales better than MMD as both the number of graphs in the reference set increases and when the size of the graphs increases. This will be important for comparing distributions of large data sets such as the commonly used Zinc dataset or QM9. Overall, we find that our method can be easily applied in practical use cases, especially given that models for graph generation tyically generate graphs with well under 1000 nodes.

\section{Ethical Concerns}\label{app:sec:Ethical Concerns}
We have proposed a general framework for comparing graph distributions focusing primarily on method and theoretical development rather than on potential applications. We currently view drug discovery as being one of the main application areas, where further experiments may be required, but we have no evidence that our method enhances biases or causes harm in any way. 

\clearpage
\appendix
\onecolumn

\end{document}